\documentclass{article}

\usepackage[preprint]{neurips_2026}
\usepackage[utf8]{inputenc}
\usepackage[T1]{fontenc}
\usepackage{hyperref}
\usepackage{url}
\usepackage{booktabs}
\usepackage{amsfonts}
\usepackage{amsmath,amssymb,amsthm,mathtools}
\usepackage{nicefrac}
\usepackage{microtype}
\usepackage{xcolor}
\usepackage{enumitem}
\usepackage{multirow}
\usepackage{graphicx}
\usepackage{subcaption}
\usepackage{array}
\usepackage{tabularx}
\usepackage{makecell}
\usepackage{array}
\usepackage{rotating}
\usepackage{caption}
\usepackage{placeins}
\usepackage{float}

\title{Geometry-Conditioned Fixed-Scaffold Encoders for Time-Warp Robust Sequence Retrieval}

\author{%
  Cassandra Yang\\
  New York University\\
  \texttt{cassandrayang@nyu.edu}
  \And
  Yufan Tang\\
  Fudan University\\
  \texttt{23307130372@m.fudan.edu.cn}
}

\newtheorem{theorem}{Theorem}
\newtheorem{proposition}{Proposition}

\newtheorem{corollary}{Corollary}

\newcommand{\R}{\mathbb{R}}

\newcommand{\norm}[1]{\left\lVert #1 \right\rVert}
\newcommand{\abs}[1]{\left\lvert #1 \right\rvert}

\newcommand{\Aff}{\mathcal{A}}
\newcommand{\Proj}{\Pi_{\Aff}}

\begin{document}
\maketitle
\vspace{-1.0em}

\begin{abstract}
Embedding-based retrieval is attractive for long sequence collections because each item can be encoded once and searched by nearest-neighbor ranking. The difficulty is that the objects being indexed are often observed under a noncanonical clock: cardiac cycles stretch with rate, speech changes with tempo, and sensor traces reach comparable states at different speeds. This paper studies a specific source of instability in patch-based encoders for this regime.  If patch boundaries are chosen from signal geometry, then the tokenization can change under the same temporal deformation that the representation is expected to tolerate. We propose \textbf{GeoPatch}, a fixed-scaffold patch encoder that keeps token support independent of geometry and uses slope, curvature, acceleration, affine-residual, and confidence descriptors only as continuous conditioning variables. The design turns boundary variation into feature modulation: geometry can change the embedding through a controlled pathway, but it cannot change the number, order, or support of local tokens. We formalize this distinction through a mechanism-level stability analysis that separates boundary drift, affine timing variation, confidence-weighted geometry perturbation, and retrieval-margin effects. The same local tokens support global embedding retrieval and late-interaction scoring, so the scoring rule can be matched to the evaluation protocol.  Across ECG, speech, and multivariate time-series retrieval tasks, GeoPatch improves early-rank retrieval under timing variation while exposing a clear trade-off between local surface matching and strict non-overlap retrieval.
\end{abstract}
\vspace{0.2em}

\section{Introduction}
Sequence retrieval often asks a model to identify the same local event after the clock has changed.  A normal beat and a faster beat may share morphology, a spoken prompt may preserve phonetic content under different tempo, and a motion trace may pass through the same configuration at a shifted phase.  Pairwise alignment methods handle this variability by optimizing correspondences, as in DTW, soft-DTW, generalized time warping, differentiable DTW, and diffeomorphic temporal alignment \citep{muller2007dtw,cuturi2017soft,zhou2012gtw,xu2023decdtw,weber2019dtan,weber2023regularization,weber2025timepoint}.  Large-gallery retrieval imposes a different constraint: each sequence should be embedded once, indexed, and compared without solving a full alignment problem against every candidate.  Recent time-series representation methods and patch Transformers make this embedding view increasingly effective \citep{yue2022ts2vec,eldele2021tstcc,tonekaboni2021tnc,lee2024softclt,nie2023patchtst}, but they do not directly address how local tokenization behaves under temporal deformation.

The tokenization step matters because patch boundaries are discrete.  When a patch starts at a fixed grid location, a time warp perturbs the values inside the patch.  When a patch starts at a peak, corner, curvature extremum, or learned cut point, the same warp can also move the boundary.  The second case is qualitatively different: the encoder receives a different collection of local domains before any learned invariance can act.  This creates a conflict between two otherwise reasonable goals.  Geometry is useful evidence for matching local morphology, but using geometry to choose token boundaries makes the input units depend on the nuisance variation.

We call this conflict the segmentation-invariance paradox.  It is not a claim that adaptive segmentation is always inferior; adaptive boundaries can be beneficial when the boundary rule is stable and the downstream target is aligned with the segmentation. However, in warp-tolerant retrieval, a geometry-dependent boundary is an additional random variable whose stability must be established.  If that stability is absent, an encoder is asked to identify equivalent content after the content has been partitioned into non-equivalent tokens.

GeoPatch addresses this instability at the patch interface.  It extracts patches from a fixed scaffold and passes local geometry as conditioning rather than as a segmentation rule.  Each patch therefore has two roles: its support defines a comparable unit across observations, while its geometric descriptor describes the local state inside that unit.  The descriptor includes slope, curvature, acceleration, affine-residual summaries, and confidence.  These quantities modulate the patch embedding continuously through a lightweight geometry branch, but they cannot alter the token support.  The resulting local tokens can be pooled into a single vector or retained for late-interaction retrieval in the style of token-level MaxSim scoring \citep{khattab2020colbert}.

We thus develop threefold contributions on this novel retrieval mechanism.  First, we introduce a fixed-scaffold, geometry-conditioned encoder with validity masks, confidence weighting, decorrelation, and protocol-aware aggregation. Second, we give a mechanism-level analysis showing how the architecture removes a discrete source of drift and converts the remaining geometry variation into a bounded perturbation of the embedding. Third, we evaluate the method on ECG, speech, and UEA retrieval tasks, reporting both early-rank and full-ranking metrics and separating standard retrieval from strict non-overlap evaluation.

\section{Related Work}
\paragraph{Temporal alignment.}
DTW and soft-DTW define similarity through monotone correspondence paths, while generalized time warping and optimal-transport warping extend the correspondence objective to richer constraints and modalities \citep{muller2007dtw,cuturi2017soft,zhou2012gtw,latorre2023otw}.  Differentiable alignment layers and learned temporal alignment models, including D3TW, DecDTW, DTAN, RF-DTAN, and TimePoint, make correspondence search more compatible with learning or faster inference \citep{chang2019d3tw,xu2023decdtw,weber2019dtan,weber2023regularization,weber2025joint,weber2025timepoint}.  GeoPatch targets the same nuisance variable but deliberately avoids estimating an alignment path during gallery search.  Its contribution is to preserve a stable local indexing substrate while allowing geometry to influence the representation continuously.

\paragraph{Time-series encoders and contrastive representation learning.}
Convolutional baselines, random-kernel transforms, and patch Transformers provide strong archive-level time-series representations \citep{wang2017resnet,ismail2020inceptiontime,dempster2020rocket,dempster2021minirocket,nie2023patchtst}.  Self-supervised and contrastive models such as CPC, TS-TCC, TNC, TS2Vec, CoST, SoftCLT, and transformer-based multivariate encoders learn temporal invariances from augmentations, neighborhoods, or soft positive assignments \citep{oord2018contrastive,eldele2021tstcc,tonekaboni2021tnc,yue2022ts2vec,woo2022cost,lee2024softclt,zerveas2021transformer}.  SoftCLT is especially relevant because it relaxes both data-space and timestamp-level positive structure.  GeoPatch is orthogonal to these objectives: it changes how tokens are constructed and conditioned, and it can be trained with standard supervised contrastive losses \citep{khosla2020supervised}.  CLOCS motivates the need for split-aware cardiac evaluation because local temporal and patient-level structure can otherwise be confounded \citep{kiyasseh2021clocs}.

\paragraph{Invariant architectures and geometry.}
Hard-coded invariance methods show that known nuisance transformations can be controlled architecturally rather than learned entirely from data \citep{germain2025invconv}.  GeoPatch follows this principle locally but does not remove all geometry.  Curvature and acceleration may be discriminative, while shifts and local linear rate changes may be nuisance.  The model therefore residualizes affine timing components, weights descriptors by confidence, and injects residual geometry as a continuous condition.  This is a selective invariance strategy: the scaffold is invariant to geometry, but the token content remains geometry-aware.

\paragraph{Aggregation and retrieval protocols.}
Patch tokens can be aggregated by set functions, attention-based set models, or ordered Transformers depending on the target relation \citep{zaheer2017deepsets,lee2019set,vaswani2017attention}.  Late-interaction retrieval, popularized by ColBERT, delays token-level matching until after encoding and often improves early precision \citep{khattab2020colbert}.  GeoPatch uses the same principle for sequence patches.  However, token-level matching is not intrinsically better under every protocol: alignable-video retrieval work makes clear that the definition of a valid match shapes the appropriate metric and scorer \citep{dave2024sync}.  We therefore report standard and strict retrieval separately and treat MaxSim as an operating point rather than as a universally preferred aggregation rule.

\section{Problem Setup and Mechanism}
\label{sec:mechanism}
Let $x: [0,1] \to \mathbb{R}^d$ denote a canonical source signal. An observation $y$, subject to a monotone temporal deformation, is formalized through the generative model:
\begin{equation}
    y(t) = A(t) \odot x(\sigma(t)) + \epsilon(t),
    \label{eq:gen}
\end{equation}
where $\sigma$ defines the non-linear observed-to-canonical clock mapping, $A(t)$ represents channel-wise amplitude modulation, and $\epsilon(t)$ encapsulates additive noise.

The central distinction is whether geometry enters before or after discretization.  Let a geometry-dependent segmentation rule produce a patch family
\begin{equation}
    \mathcal P(y)=\{\Omega_1(y),\ldots,\Omega_{K(y)}(y)\},
\end{equation}
and let a patch representation be written abstractly as
\begin{equation}
    z_{\mathrm{seg}}(y)=\rho\left(\{f(y|_{\Omega_k(y)})\}_{k=1}^{K(y)}\right).
    \label{eq:adaptive_rep}
\end{equation}
A temporal warp then affects both the values inside each patch and the patch family itself.  For two content-matched observations $y_A$ and $y_B$, the discrepancy admits a boundary--feature decomposition obtained by inserting the cross-term that applies $y_B$'s patch family to the underlying signal $y_A$:
\begin{align}
    \|z_{\mathrm{seg}}(y_A)-z_{\mathrm{seg}}(y_B)\|
    &\le
    \underbrace{\|R(\mathcal P(y_A),y_A)-R(\mathcal P(y_B),y_A)\|}_{\text{boundary drift}}
    +
    \underbrace{\|R(\mathcal P(y_B),y_A)-R(\mathcal P(y_B),y_B)\|}_{\text{feature drift}},
    \label{eq:drift_decomp}
\end{align}
where $R(\mathcal P,y)$ denotes the functional that extracts patch values $\{y|_{\Omega}\}_{\Omega\in\mathcal P}$ from the continuous signal $y$ under patch family $\mathcal P$ and then applies patch encoding and aggregation.  Standard Lipschitz arguments control the second term only after the domains are fixed. The first term is a discrete change in the objects being encoded and is not controlled by smoothness of $f$ or $\rho$ alone.

GeoPatch removes this first term by construction.  It fixes a scaffold $\mathcal S=\{I_k\}_{k=1}^K$, with $I_k=[t_k,t_k+\ell]$ by default, and computes
\begin{equation}
    y_k=y|_{I_k},\qquad g_k=G(y_k),\qquad m_k\in\{0,1\}.
\end{equation}
The descriptor $G$ contains slope, curvature, acceleration, affine-residual summaries, and confidence, but it does not define $I_k$.  The representation factorizes as
\begin{equation}
    z_{\mathrm{GeoPatch}}(y)=\rho_\theta\left(\left\{F_\theta\big(\psi_\theta(y_k),\gamma_\theta(g_k),m_k\big)\right\}_{k=1}^K\right).
    \label{eq:GeoPatch_factorization}
\end{equation}
The scaffold is common across observations, so geometry-induced variation enters through $g_k$ and the modulation network $\gamma_\theta$, not through token support.  This is tested in the experiments and analyzed in Section~\ref{sec:stability}.

\section{GeoPatch Architecture}
\label{sec:GeoPatch}
\subsection{Overview}
GeoPatch transforms an observed signal $y$ into a normalized sequence embedding $z$ through a dual-stream pipeline of patch extraction and geometric conditioning:
\begin{equation}
    y \longrightarrow \{y_k,g_k,m_k\}_{k=1}^K
    \longrightarrow \{z_k\}_{k=1}^K
    \longrightarrow z \quad \text{or} \quad \{z_k\}_{k=1}^K .
    \label{eq:pipeline}
\end{equation}
The architecture follows the factorization in Eq.~\eqref{eq:GeoPatch_factorization}: stabilize the patch support, encode local content, encode local geometry, fuse the two streams, and aggregate according to the retrieval protocol.  Figure~\ref{fig:GeoPatch} gives the complete pipeline.

\begin{figure}[t]
    \centering
    \IfFileExists{geopatch_framework.png}{%
      \includegraphics[width=\textwidth]{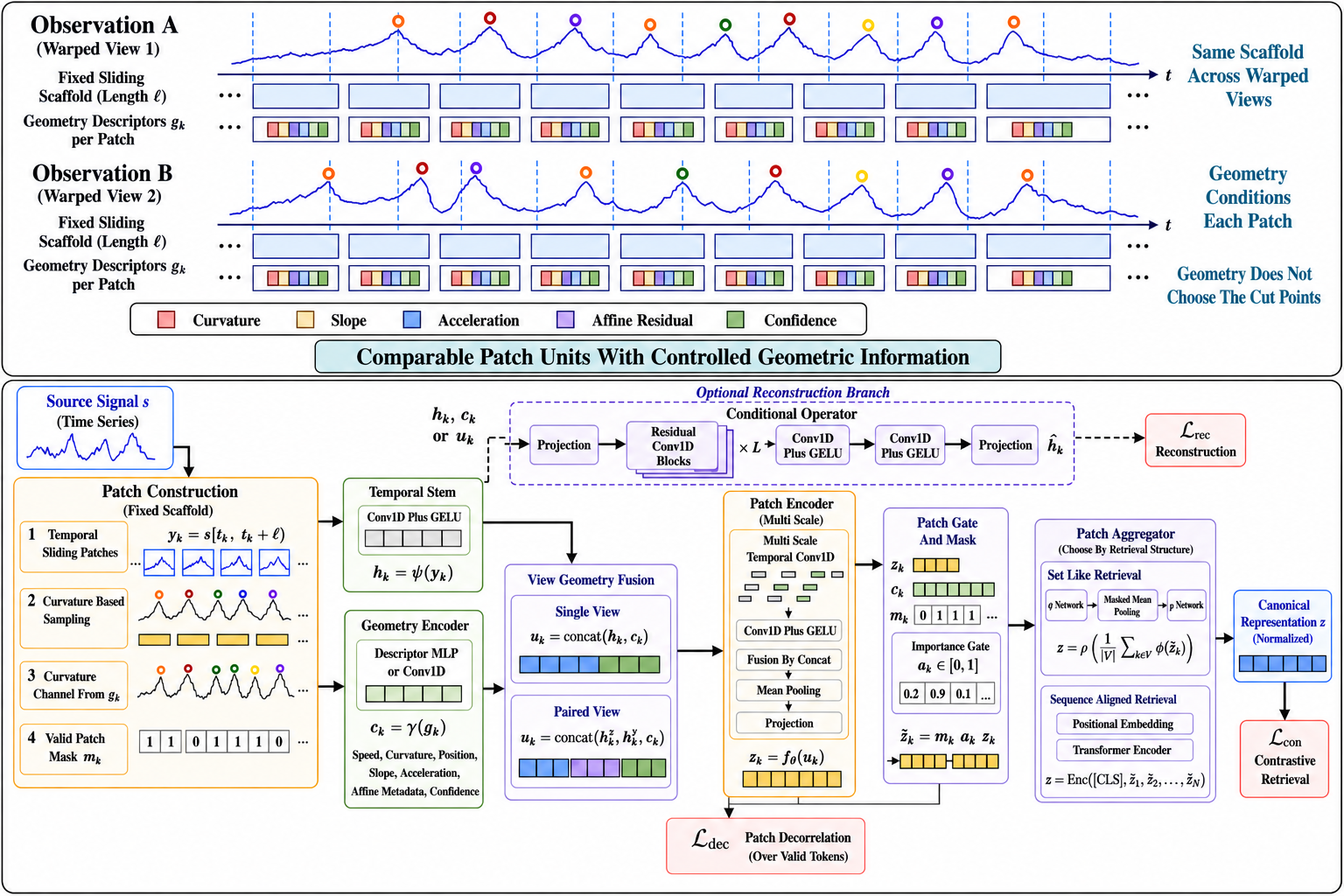}%
    }{%
      \fbox{\parbox[c][1.35in][c]{0.98\linewidth}{\centering\footnotesize GeoPatch architecture figure}}%
    }
     \caption{\textbf{The GeoPatch framework architecture.} Raw sequences are mapped to a structural patch scaffold in parallel with the extraction of continuous geometric descriptors (e.g., curvature and slope). A temporal stem and a geometry module independently process these streams before a fusion block integrates them into geometry-conditioned local tokens. To handle sequence noise, a validity mask gates unreliable or padding patches. The resulting tokens are either aggregated into a global embedding or cached for late-interaction retrieval.}
    \label{fig:GeoPatch}
\end{figure}

\subsection{Patch construction and descriptors}
Patches are extracted on a fixed scaffold $I_k=[t_k,t_k+\ell]$ with stride $\Delta$, giving $y_k=y[t_k:t_k+\ell]$. The geometry descriptor $g_k$ contains local first- and second-order finite-difference statistics, curvature summaries, affine-residual features, and confidence indicators.  A geometry encoder maps these descriptors to $c_k=\gamma_\theta(g_k)$.  The scaffold is fixed with respect to the descriptor: geometry is observed inside the token, not used to define the token.

\subsection{Geometry-conditioned patch tokens}
A temporal stem produces $h_k=\psi_\theta(y_k)$.  Content and geometry are fused before local encoding,
\begin{equation}
    u_k = \mathrm{Fuse}_\theta(h_k,c_k), \qquad z_k=f_\theta(u_k),
    \label{eq:token}
\end{equation}
where $\mathrm{Fuse}_\theta$ may be implemented by concatenation followed by projection, gated modulation, or feature-wise affine conditioning. This architecture is modular; $f_\theta$ can be implemented as a multiscale CNN for clinical morphology or an attention-based block for sequence-aligned data, depending on the domain requirements.

\subsection{Validity, gating, and decorrelation}
The gated token $\tilde z_k = m_k a_k z_k$ uses a learned gate
\begin{equation}
    a_k = m_k\,\sigma\!\left(W_2\,\mathrm{GELU}(W_1\,\xi_k+b_1)+b_2\right) , \label{eq:gate}
\end{equation}
where $\xi_k\in\mathbb R^4$ stacks $\log(1+\|\nabla^j r_k\|_1/(\ell-j))$ for $j=0,1,2$ and $\log(1+\bar\kappa_k)$, with $\bar c_k$ multiplying the $j{=}2$ and curvature entries; $r_k$ is the within-patch affine residual, $\bar c_k\in[0,1]$ is mean per-position validity, $\bar\kappa_k$ is mean patch curvature, and $\ell$ is the patch length. Initialized with $W_2=0, b_2=-2$, so $a_k\approx 0.12\,m_k$.

To prevent local representation collapse, valid tokens are regularized with a patch decorrelation loss:
\begin{equation}
    \mathcal L_{\mathrm{dec}} = \frac{1}{B}\sum_{b=1}^{B}\frac{\mathbb 1[|\mathcal M_b|>1]}{|\mathcal M_b|(|\mathcal M_b|-1)} \sum_{\substack{i,j\in\mathcal M_b\\ i\ne j}}\left|\frac{\tilde z_{b,i}^{\top}\tilde z_{b,j}}{\norm{\tilde z_{b,i}}_2\norm{\tilde z_{b,j}}_2+\varepsilon}\right| ,
    \label{eq:decorrelation}
\end{equation}
where $\mathcal M_b$ denotes the valid-patch index set for sequence $b$ within the batch, and $\varepsilon$ ensures numerical stability.

\subsection{Aggregation and retrieval scoring}
For single-vector retrieval, valid local tokens are aggregated into a normalized embedding. A set-style aggregator is applied where the presence of local morphology may dominate absolute order:
\begin{equation}
    z=\rho_\theta\left(
    \frac{\sum_{k=1}^K m_k\,\phi_\theta(\tilde z_k)}{\sum_{k=1}^K m_k+\varepsilon}
    \right),
    \label{eq:deepsets}
\end{equation}
while an ordered aggregator is used for sequence-aligned tasks, where temporal progression matters:
\begin{equation}
    z = \mathrm{Enc}_\theta\left([\mathrm{cls}],\tilde z_1+e(p_1),\ldots,\tilde z_K+e(p_K)\right)_{\mathrm{cls}} ,
    \label{eq:transformeragg}
\end{equation}
where $e(p_k)$ denotes a learned positional embedding for the patch index $p_k$. For late interaction, cached patch tokens are compared after encoding:
\begin{equation}
    S(q,c)=\sigma\left(\{\cos(z_i^{(q)},z_j^{(c)})\}_{i\in\mathcal M_q,j\in\mathcal M_c}\right),
    \label{eq:lateinteraction}
\end{equation}
where $\sigma$ is MaxSim or a smooth log-sum-exp bridge. This scorer follows the same late-interaction principle as ColBERT \citep{khattab2020colbert}: local matches are preserved after encoding rather than collapsed immediately into one vector. Because computing dense pairwise similarities across all sequence patches scales poorly to massive galleries, this late-interaction step operates strictly as a re-ranker on a reduced candidate shortlist generated via the global embedding $z$. 

\subsection{Training objective}
The training objective is
\begin{equation}
    \mathcal L
    = \lambda_{\mathrm{con}}\mathcal L_{\mathrm{con}}
    + \lambda_{\mathrm{rec}}\mathcal L_{\mathrm{rec}}
    + \lambda_{\mathrm{dec}}\mathcal L_{\mathrm{dec}}
    + \lambda_{\mathrm{task}}\mathcal L_{\mathrm{task}},
    \label{eq:objective}
\end{equation}
where $\mathcal L_{\mathrm{con}}$ is a supervised contrastive retrieval loss \citep{khosla2020supervised,oord2018contrastive}, $\mathcal L_{\mathrm{rec}}$ is optional reconstruction, $\mathcal L_{\mathrm{dec}}$ is Eq.~\eqref{eq:decorrelation}, and $\mathcal L_{\mathrm{task}}$ denotes optional classification or ranking losses.  Unless otherwise stated, retrieval uses normalized embeddings with the self-match diagonal removed.

\section{Stability Analysis}
\label{sec:stability}

The fixed-scaffold construction changes the stability problem from one of token support to one of feature modulation. With adaptive patching, a time deformation can change the support, ordering, or cardinality of the patch collection before the encoder is applied.  With a fixed scaffold, the patch domains remain shared across examples, and perturbations enter through patch content and geometry descriptors. This section formalizes that separation and connects it to retrieval behavior.

\paragraph{Boundary drift under adaptive patching.}
Let $\mathcal P(y)$ denote the patch collection induced by an input-dependent segmentation rule. As established in Eq.~\eqref{eq:drift_decomp}, the representation discrepancy between two time-warped observations, $y_A$ and $y_B$, decomposes into two distinct components: boundary drift ($\Delta_{\mathrm{bdry}}$) and feature drift ($\Delta_{\mathrm{feat}}$). 

The feature drift term ($\Delta_{\mathrm{feat}}$) is controlled by the continuity of the encoder and aggregator once the domains are fixed. The boundary drift term ($\Delta_{\mathrm{bdry}}$) is structurally different: if $\mathcal P(y_A)$ and $\mathcal P(y_B)$ differ in support, order, or cardinality, the aggregator receives different semantic units rather than perturbations of a common token set.

\paragraph{Affine residualization removes local clock gauge.}
Within a patch, let $\tau_k\in L^2([0,1])$ denote a local warp restriction and define the affine subspace
\begin{equation}
    \Aff=\{u\mapsto au+b:a,b\in\R\}.
\end{equation}
With $\Proj$ denoting orthogonal projection onto $\Aff$, define
\begin{equation}
    r(\tau_k)=\tau_k-\Proj(\tau_k), \qquad
    \mathcal R(\tau_k)=\frac{r(\tau_k)}{\alpha_{\mathrm{loc}}(\tau_k)},
    \label{eq:affineresid}
\end{equation}
where $\alpha_{\mathrm{loc}}$ is positive and homogeneous.  If $\tau_{\mathrm{new}}=\alpha\tau+\beta$ with $\alpha>0$, then
\begin{equation}
    r(\tau_{\mathrm{new}})=\alpha r(\tau),
    \qquad
    \mathcal R(\tau_{\mathrm{new}})=\mathcal R(\tau)
\end{equation}
whenever $\alpha_{\mathrm{loc}}(\alpha\tau+\beta)=\alpha\alpha_{\mathrm{loc}}(\tau)$.  The residual descriptor is therefore invariant to local shifts and linear rate changes after normalization, while retaining non-affine timing structure.  Geometry conditioning receives this residual component instead of raw timing variation, which aligns the descriptor with local shape rather than absolute clock choice.

\paragraph{Confidence-weighted modulation stability.}
Using the symbols of Section~\ref{sec:GeoPatch}: let $g_k$ denote the per-patch geometry descriptor consumed by the geometry encoder $\gamma_\theta$, let $a_k\in[0,1]$ denote the scalar gate from Eq.~\eqref{eq:gate}, and let $\mathcal M=\{k:m_k=1\}$ denote the valid-patch index set of the single sequence under analysis (the per-sequence analogue of $\mathcal M_q,\mathcal M_c$ in Eq.~\eqref{eq:lateinteraction}).  For a fixed input $y$ and scaffold $\mathcal S$, define the normalized embedding map
\begin{equation}
    T\big((g_k)_{k=1}^K\big)=\hat z\!\left(y,(g_k)_{k=1}^K,(a_k)_{k=1}^K\right).
\end{equation}
Assume that $\gamma_\theta$, the patch encoder $f_\theta$, the aggregator $A_\theta$, and the final normalization are Lipschitz on the bounded operating region with constants $L_\gamma$, $L_f$, $L_A$, $L_N$, and that $\mathcal M$ is unchanged under descriptor perturbations.  Then for any perturbation $(\delta g_k)_{k=1}^K$,
\begin{equation}
    \norm{T\big((g_k+\delta g_k)\big)-T\big((g_k)\big)}_2
    \le
    L_NL_AL_f L_\gamma\cdot
    \frac{1}{|\mathcal M|}\sum_{k\in\mathcal M}a_k\norm{\delta g_k}_2 .
    \label{eq:drift}
\end{equation}
Thus, once support drift is removed, descriptor perturbations affect the sequence embedding only through a weighted continuous path.  The confidence weights scale the contribution of each patch to the drift budget: uncertain descriptors are attenuated, whereas reliable descriptors remain active conditioning variables.  The bound also separates descriptor corruption from evidence removal.  Perturbing curvature, slope, or acceleration changes $u_k$ under a fixed valid set; masking a span changes the observed patch content and may change $\mathcal M$.

\paragraph{Retrieval stability from score margins.}
Let $\hat z_q$ and $\hat z_c$ be unit-normalized query and candidate embeddings.  If perturbations satisfy $\norm{\delta_q}_2\le\eta$ and $\norm{\delta_c}_2\le\eta$, then
\begin{equation}
    \left|
    (\hat z_q+\delta_q)^\top(\hat z_c+\delta_c)
    -
    \hat z_q^\top \hat z_c
    \right|
    \le
    2\eta+\eta^2 .
\end{equation}
A positive candidate remains inside the top-$R$ set whenever its margin over candidates below the cutoff exceeds the maximal score drift.

\paragraph{Late interaction and protocol-dependent ranking.}
Late interaction scores a query-candidate pair through local token similarities,
\begin{equation}
    S(q,c)=\sigma\left(\{\cos(z_i^{(q)},z_j^{(c)})\}_{i\in\mathcal M_q,j\in\mathcal M_c}\right),
\end{equation}
with MaxSim corresponding to a monotone emphasis on the strongest local match.  This scoring rule is well matched to retrieval tasks where local morphology defines relevance.  It also changes the ranking bias: candidates with one highly similar patch can be promoted even when global identity is weaker.  The empirical Standard-vs-Strict analysis therefore follows directly from the scoring rule.  Standard retrieval rewards local surface matches, while strict non-overlap retrieval places greater weight on cross-record identity.

\section{Datasets and Evaluation}
\subsection{Datasets}
We evaluate the method in three retrieval regimes that stress different forms of temporal variability.
\begin{itemize}[leftmargin=1.25em,itemsep=0.15em,topsep=0.15em]
    \item \textbf{Rhythm ECG.}  NSRDB, NSTDB, and MITDB evaluate window-level retrieval under variation in cardiac rate, phase, morphology, and recording context.  This setting is sensitive to protocol leakage: overlapping windows and same-record positives can be easier than cross-record matches, motivating explicit attention to temporal and patient/record structure \citep{kiyasseh2021clocs}.
    \item \textbf{Speech.}  CMU ARCTIC evaluates prompt- and speaker-relevant retrieval under tempo, articulation, and prosodic variation.  Although our task is retrieval rather than recognition, audio Transformer work provides relevant domain context \citep{gong2021ast,gong2022ssast,feng2023flexiast}.
    \item \textbf{UEA multivariate archive.}  Seventeen UEA datasets provide a heterogeneous test of whether fixed-scaffold geometry conditioning transfers beyond a single signal domain, covering physiological, motion, tracking, acoustic, and general sensor sequences.  UEA-mean and UEA-median in Table~\ref{tab:main_retrieval}(a) aggregate over all 17; the robustness sweep in Table~\ref{tab:full_robustness} uses a 7-dataset representative subset spanning these clusters (full list in Appendix~\ref{app:robustness}).
\end{itemize}

\subsection{Evaluation}
Each test sequence is used as a query against a gallery, with embeddings compared by cosine similarity after $\ell_2$ normalization and the self-match diagonal removed.  We report Recall@$R$, MRR, and mAP. Recall@1 measures first-hit behavior—the natural operating point for short-list retrieval—while mAP measures the quality of the entire ranking.

When overlap or same-record ambiguity exists, we distinguish \textbf{Standard} retrieval, which uses the candidate set induced by the dataset construction, from \textbf{Strict} retrieval, which removes overlapping or same-record positives to emphasize cross-record identity. This distinction determines whether a strong local patch match should be rewarded or treated as a near-duplicate; because late interaction directly optimizes local patch similarity, the standard-vs-strict split is part of the experimental design rather than an auxiliary robustness check.

\section{Experiments}
\label{sec:experiments}

We evaluate the GeoPatch framework across three high-variance domains: cardiac rhythm analysis (ECG), acoustic speech retrieval, and heterogeneous multivariate time-series (UEA Archive). Our evaluation focuses on the model's capacity for early-rank precision—the ability to isolate a correct match at the top of a ranked list despite significant non-linear temporal deformations. All experiments were conducted on a multi-node cluster of NVIDIA A800 GPUs, requiring approximately 132 GPU-hours for the reported results and 400 GPU-hours for the full research project.

\subsection{Comparative Retrieval Performance}
\label{subsec:main_results}

Table~\ref{tab:main_retrieval}(a) isolates the architectural contribution by comparing models using standard single-vector retrieval. GeoPatch outperforms all baselines on NSRDB by nearly 20 percentage points in Recall@1. On this dataset, variations in heart rate heavily test a model's ability to match local morphology despite timing shifts. The model ties at the maximum possible score on NSTDB and remains competitive on MITDB and the UEA archive. These results directly support the stability analysis from Section~\ref{sec:stability}. Replacing hard, geometry-dependent patch boundaries with fixed windows and continuous geometric descriptors creates an embedding space that resists severe timing deformations.

\subsection{Stress Testing: Robustness to Deformation and Information Loss}
\label{subsec:robustness}

We examine how specific disruptions affect the learned representations using controlled synthetic perturbations, with results detailed in Table~\ref{tab:full_robustness} and visually summarized in Figure~\ref{fig:ablations_robustness}(b). When test sequences undergo smooth temporal warping, the average Recall@1 drops by only 4.2\%. This small decrease aligns with the mathematical bounds derived earlier: a fixed patch scaffold prevents boundaries from drifting when the time axis stretches or compresses. Adding noise directly to the geometry channels (such as curvature and slope) causes almost no measurable degradation, with an average change of $-0.9\%$. Because the geometry features only condition the token representations rather than determining where tokens begin and end, noisy descriptors do not break the underlying sequence structure. In contrast, local span masking removes contiguous data segments, leading to an average 21.3\% decrease in Recall@1. This larger drop confirms that the model relies heavily on local shape features to find matches. Finally, shuffling the order of the patches only reduces performance by 1.9\%, suggesting the evaluated tasks depend more on the presence of specific local shapes than their exact global positions.

\subsection{Ablation Studies and Metric Trade-offs}
\label{subsec:ablations}

To understand how each architectural component contributes to the final performance, we test multiple ablated configurations (Figure~\ref{fig:ablations_robustness}a). The results show that the fixed scaffold must be paired with geometry conditioning to work well. Removing the curvature and residual timing inputs lowers accuracy on the ECG and speech datasets, indicating that the network uses these geometric signals to mathematically correct for local distortions.

Next, we introduce late-interaction scoring to take full advantage of the patch-level representations. Instead of comparing sequences using only a single pooled vector, late interaction (MaxSim) computes pairwise similarities among all local tokens in a query and a candidate, rewarding pairs that share at least one strongly matched local feature. As shown in Table~\ref{tab:main_retrieval}(b), applying this scoring rule as a secondary step to a top-50 candidate list increases Recall@1 by 9.3 points on NSRDB and 17.7 points on MITDB. However, this scoring method naturally favors candidates with highly similar local segments. Under a strict non-overlap evaluation, where overlapping windows from the same patient are treated as negatives, the performance of the late-interaction scorer falls sharply (Appendix~\ref{app:pareto}). Consequently, we offer late interaction as a specialized scoring step for tasks that reward local shape matching, rather than a fixed component of the main architecture.

\begin{table}[htbp]
\centering
\scriptsize
\setlength{\tabcolsep}{4.5pt}
\renewcommand{\arraystretch}{1.15}

\caption{\textbf{Main retrieval performance.} (a) reports single-vector retrieval (R@1 / mAP) under the shared nearest-neighbor protocol, comparing the GeoPatch architecture against representation-learning baselines on equal scoring footing.  (b) reports the effect of an optional late-interaction reranker (MaxSim) applied to the cached GeoPatch tokens; this is presented as a scoring-rule ablation rather than as the main architectural claim, since MaxSim is protocol-dependent (see Appendix~\ref{app:pareto}).  Bold = best in column; gray = second-best.}
\label{tab:main_retrieval}

\vspace{2.0em}
\noindent
\textbf{(a) Architecture comparison: single-vector retrieval (R@1 / mAP)}
\vspace{0.15em}

\resizebox{\linewidth}{!}{%
\begin{tabular}{lcccccc}
\toprule
& \multicolumn{3}{c}{\textbf{Rhythm ECG}} & \textbf{Speech} & \multicolumn{2}{c}{\textbf{UEA archive}} \\
\cmidrule(lr){2-4} \cmidrule(lr){5-5} \cmidrule(lr){6-7}
\textbf{Method} & \textbf{NSRDB} & \textbf{NSTDB} & \textbf{MITDB} & \textbf{CMU ARCTIC} & \textbf{UEA-mean} & \textbf{UEA-median} \\
\midrule
\textbf{GeoPatch} & \textbf{0.8800} / 0.4589 & \textbf{1.0000} / \textbf{0.9999} & 0.8018 / 0.4853 & \textbf{0.8515 / 0.6338} & 0.7639 / 0.5382 & 0.8447 / 0.5796 \\
\midrule
Vanilla Trans. & 0.6736 / \textbf{0.5023} & 0.9982 / 0.9966 & 0.6458 / 0.4853 & 0.8012 / 0.5761 & 0.5920 / 0.4484 & 0.6341 / 0.3667 \\
PatchTST & 0.6520 / 0.4943 & 1.0000 / 0.9913 & 0.6334 / 0.4413 & 0.8044 / 0.5754 & 0.7326 / 0.6232 & 0.8594 / \textbf{0.7087} \\
TS2Vec & 0.6445 / 0.4927 & 1.0000 / 0.9954 & 0.7455 / 0.5276 & 0.8076 / 0.5741 & 0.7841 / {\color{gray}0.6550} & \textbf{0.8694} / 0.6548 \\
TS-TCC & 0.6799 / 0.4882 & 1.0000 / 0.9894 & 0.7587 / 0.5149 & 0.8098 / 0.5713 & 0.7045 / 0.5624 & 0.7645 / 0.6100 \\
TS-ResNet & 0.6676 / 0.4755 & 1.0000 / {\color{gray}0.9981} & 0.7343 / 0.4835 & 0.7619 / 0.5445 & 0.7691 / 0.6524 & {\color{gray}0.8652} / 0.6513 \\
InceptionTime & 0.6609 / 0.4951 & 1.0000 / 0.9906 & 0.7690 / {\color{gray}0.5389} & 0.7913 / 0.5457 & {\color{gray}0.7925} / \textbf{0.6735} & 0.8573 / {\color{gray}0.6897} \\
MiniROCKET & 0.6673 / 0.4930 & 0.9945 / 0.9494 & \textbf{0.9138 / 0.6056} & {\color{gray}0.8450 / 0.6323} & \textbf{0.8538} / 0.6471 & 0.8507 / 0.6464 \\
InvConvNet & {\color{gray}0.6805 / 0.4998} & 1.0000 / 0.9966 & {\color{gray}0.8416} / 0.5320 & 0.7517 / 0.4919 & 0.7707 / 0.4935 & 0.6712 / 0.4584 \\
\bottomrule
\end{tabular}}

\vspace{0.6em}

\noindent
\textbf{(b) Late-interaction reranker: GeoPatch + MaxSim vs GeoPatch}
\vspace{0.15em}

\resizebox{\linewidth}{!}{%
\begin{tabular}{lcccccc}
\toprule
& \textbf{NSRDB} & \textbf{NSTDB} & \textbf{MITDB} & \textbf{CMU ARCTIC} & \textbf{UEA-mean} & \textbf{UEA-median} \\
\midrule
\multicolumn{7}{l}{\textit{Standard retrieval (R@1 / mAP)}} \\
GeoPatch (base) & 0.8800 / 0.4589 & 1.0000 / 0.9999 & 0.8018 / 0.4853 & 0.8515 / 0.6338 & 0.7639 / 0.5382 & 0.8447 / 0.5796 \\
GeoPatch + MaxSim & 0.9727 / 0.4591 & 1.0000 / 0.9999 & 0.9785 / 0.4905 & 0.8538 / 0.6835 & 0.7984 / 0.5427 & 0.8771 / 0.5807 \\
$\Delta$ R@1 / mAP & +9.27 / +0.02 & 0 / 0 & +17.67 / +0.52 & +0.23 / +4.97 & +3.45 / +0.45 & +3.24 / +0.11 \\
\midrule
\multicolumn{7}{l}{\textit{Strict non-overlap retrieval (R@1; ECG only --- on UEA / CMU strict $\equiv$ standard, see Appendix ~\ref{app:robustness})}} \\
GeoPatch (base) & 0.3229 & 0.6494 & 0.3885 & \textemdash & \textemdash & \textemdash \\
GeoPatch + MaxSim & 0.0563 & 0.1697 & 0.0355 & \textemdash & \textemdash & \textemdash \\
$\Delta$ R@1 (Strict) & -26.66 & -47.97 & -35.30 & \textemdash & \textemdash & \textemdash \\
\bottomrule
\end{tabular}}
\end{table}

\begin{figure}[htbp]
\centering

\begin{minipage}{\linewidth}
    {\raggedright \textbf{(a)} \par}
    \vspace{-0.1em}
    \centering
    \IfFileExists{geopatch_ablation_heatmap.png}{%
      \includegraphics[width=0.98\linewidth,keepaspectratio]{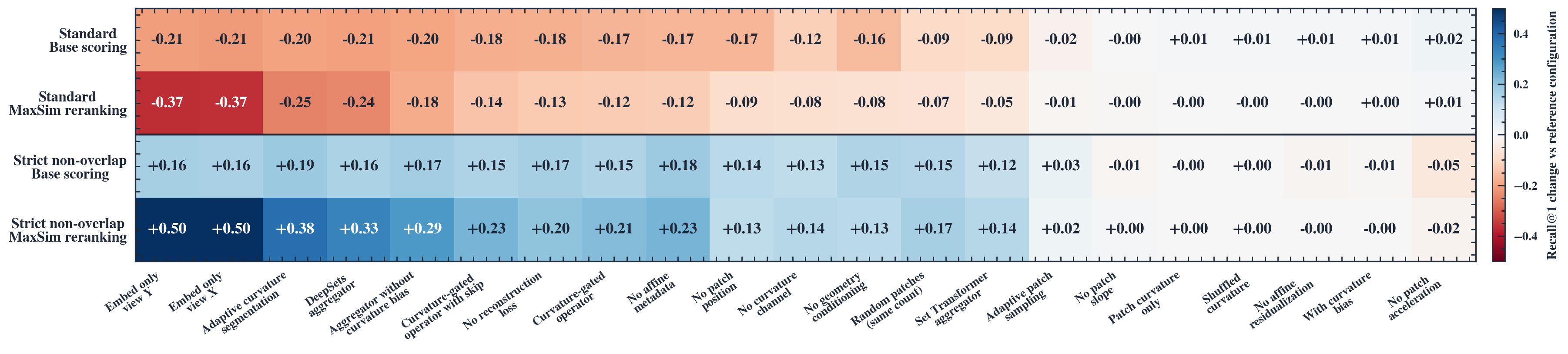}%
    }{%
      \fbox{\parbox[c][0.82in][c]{0.98\linewidth}{\centering\footnotesize Ablation heatmap placeholder}}%
    }
\end{minipage}

\vspace{0.2em}

\begin{minipage}{\linewidth}
    {\raggedright \textbf{(b)} \par}
    \vspace{-0.1em}
    \centering
    \IfFileExists{geopatch_robustness_heatmap.png}{%
      \includegraphics[width=0.98\linewidth,keepaspectratio]{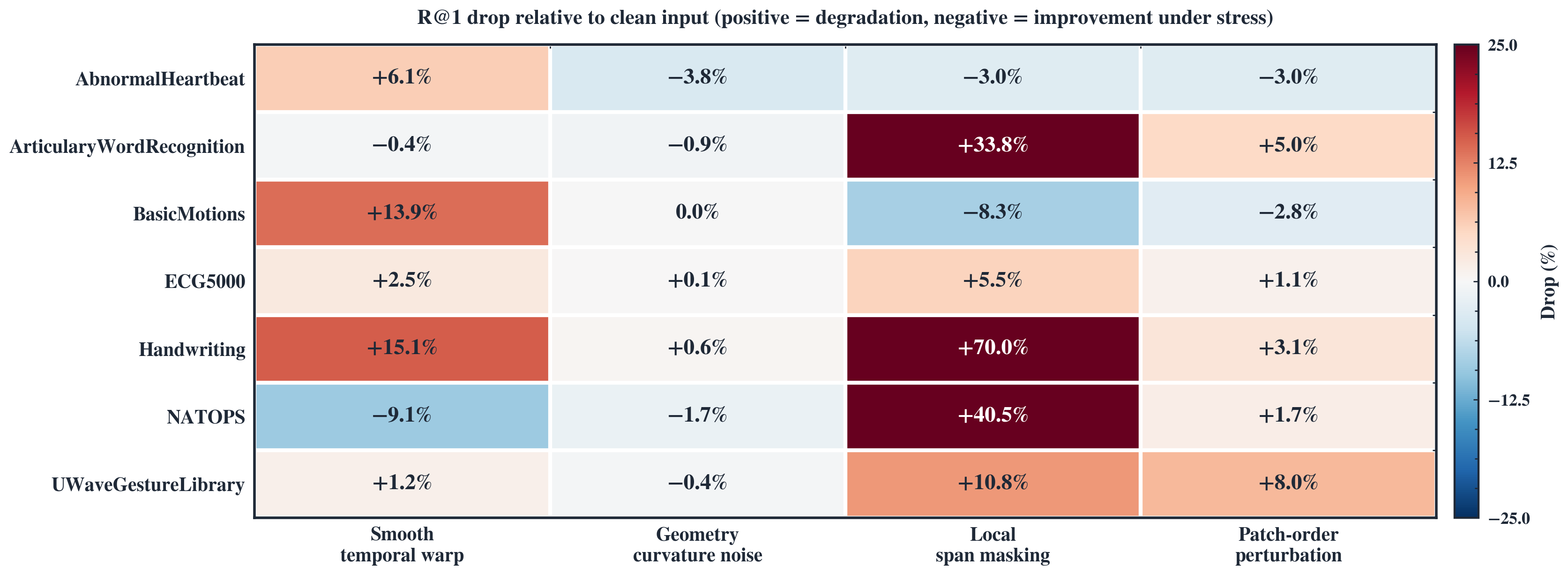}%
    }{%
      \fbox{\parbox[c][0.82in][c]{0.98\linewidth}{\centering\footnotesize Robustness heatmap placeholder}}%
    }
\end{minipage}

\caption{\textbf{Retrieval stability and metric trade-offs.} \textbf{(a)} Ablations of the fixed scaffold and geometry conditioning. Divergence between Standard and Strict protocols reveals a trade-off: optimizing for local late-interaction similarity inherently penalizes exact cross-record identity. \textbf{(b)} Resilience under synthetic perturbations. High stability under smooth warping and geometric noise (<5\% degradation) confirms boundary drift mitigation. The larger drop (10.6\%--22.9\%) under span masking reflects the expected cost of removing discriminative local morphology.}
\label{fig:ablations_robustness}
\end{figure}
\FloatBarrier
\section{Limitations and Boundary Conditions}
\label{sec:limitations}

The GeoPatch inductive bias is specifically optimized for invariance to local temporal deformations where morphology is the primary discriminative feature. Consequently, it is ill-suited for tasks where absolute temporal positioning, calibrated amplitude, or global phase relationships carry the target signal. In such regimes, geometry conditioning may introduce unnecessary model capacity without providing functional invariance.

Furthermore, our stability analysis is conditioned on the existence of a fixed scaffold and Lipschitz-continuous activations. While GeoPatch mitigates boundary drift, it does not guarantee invariance to singular or non-diffeomorphic transformations that violate these continuity assumptions. Finally, as a retrieval-first architecture, GeoPatch is intended to populate the top-K candidate set efficiently; it remains complementary to, rather than a replacement for, high-precision pairwise alignment methods (e.g., DTW) which may serve as optimal rerankers in a multi-stage deployment pipeline.

\section{Conclusion}
\label{sec:conclusion}

This work resolves a central instability in patch-based sequence modeling by decoupling token support from geometric modulation. By enforcing a fixed architectural scaffold and relegating signal geometry to continuous conditioning, the GeoPatch framework successfully transforms discrete boundary drift into bounded, stable feature variation. 

Empirical evaluations across high-variance domains—spanning cardiac rhythms to acoustic speech—demonstrate that this structural separation is strictly necessary for high-precision, early-rank retrieval. Furthermore, the efficacy of the late-interaction protocol establishes a broader principle for time-series representation: warp-robustness cannot be isolated to the encoder. It demands a rigorous alignment between a stable token interface and a scoring metric explicitly designed to preserve local morphological correspondence.

\appendix

\section{Dataset Details, Targets, Protocols, and Preprocessing}
\label{app:datasets}
The main paper groups the evaluation into ECG, speech, and UEA multivariate retrieval.  This appendix records the operational target for each family because the retrieval target determines whether local patch similarity, global identity, or cross-record matching should be rewarded.  It also specifies the preprocessing pipeline used for the UEA archive, since this pipeline determines the raw patch tensor, the geometry tensor, the patch mask, and the time mask consumed by GeoPatch.

\begin{table}[h]
\centering
\scriptsize
\setlength{\tabcolsep}{4pt}
\renewcommand{\arraystretch}{1.2}
\caption{\textbf{Dataset families and retrieval definitions.} Standard retrieval evaluates the candidate set induced by the dataset construction. Strict retrieval removes overlapping or same-record candidates when such candidates may behave as near duplicates.}
\label{tab:dataset_protocols}
\begin{tabularx}{\linewidth}{p{0.18\linewidth}p{0.21\linewidth}p{0.28\linewidth}X}
\toprule
\textbf{Family} & \textbf{Datasets} & \textbf{Retrieval target} & \textbf{Protocol concern} \\
\midrule
Rhythm ECG & NSRDB, NSTDB, MITDB & Windows with compatible rhythm morphology under rate, phase, and recording variation & Same-record and overlapping windows can inflate local similarity; strict evaluation emphasizes cross-record retrieval. \\
Speech & CMU ARCTIC & Prompt- or speaker-relevant local acoustic structure under tempo and prosody variation & Nearby utterances or repeated prompts can favor surface similarity; protocol choice determines whether this is valid evidence. \\
UEA archive & 17 multivariate datasets & Same-class retrieval for heterogeneous multivariate time series & Dataset-provided train/test splits define the gallery; heterogeneity motivates mean and median summaries. \\
\bottomrule
\end{tabularx}
\end{table}

For the UEA archive, we use the standard train/test split supplied by each dataset and treat each test sequence as a query against the test gallery with the self-match removed.  The 17 datasets are grouped as follows:
\begin{itemize}[leftmargin=1.25em,itemsep=0.1em,topsep=0.1em]
    \item \textbf{Physiological and clinical:} AbnormalHeartbeat, AtrialFibrillation, Epilepsy.
    \item \textbf{Motion and tracking:} AllGestureWiimoteX, AllGestureWiimoteY, AllGestureWiimoteZ, BasicMotions, CharacterTrajectories, Cricket, ERing, FingerMovements, HandMovementDirection.
    \item \textbf{Audio and spectrogram:} ArticularyWordRecognition, DuckDuckGeese.
    \item \textbf{Other sensor and synthetic:} EigenWorms, EthanolConcentration, FaceDetection.
\end{itemize}

\paragraph{Cached representation and manifest.}
Each UEA dataset is converted into cached NumPy stacks, \texttt{train\_windows.npy} and \texttt{val\_windows.npy}, with shape $(N,T,C)$, where $C=1$ for univariate datasets and $C>1$ for multivariate datasets.  A JSONL manifest stores the split, integer index, label string, and dataset name for each sequence.  The dataset wrapper reads the requested split, maps label strings to integer labels within that dataset, and returns the six-tuple consumed by the GeoPatch training loop:
\[
    (x_{\mathrm{raw}},\; y_{\mathrm{raw}},\; g,\; p,\; l,\; q),
\]
where $x_{\mathrm{raw}}$ is the possibly corrupted input patch tensor, $y_{\mathrm{raw}}$ is the clean target patch tensor, $g$ is the repeated patch-level geometry tensor, $p$ is the patch-validity mask, $l$ is the class label, and $q$ is the within-patch time mask.  Returning both $x_{\mathrm{raw}}$ and $y_{\mathrm{raw}}$ allows the same loader to support identity retrieval and masked-patch pretext variants without changing the model interface.

\paragraph{Normalization and geometry extraction.}
Unless otherwise stated, each sequence is standardized by per-window, per-channel $z$-scoring before patch extraction.  For a multichannel signal $s\in\mathbb R^{T\times C}$, the preprocessing computes first and second finite-difference derivatives along the time axis.  The scalar speed, acceleration norm, and curvature proxy are
\[
    v_t=\|\nabla s_t\|_2,\qquad
    a_t=\|\nabla^2 s_t\|_2,\qquad
    \kappa_t=\frac{a_t}{(1+v_t^2)^{3/2}+\varepsilon}.
\]
These descriptors are not used to choose the default patch boundaries.  They enter either as an additional raw channel, when curvature-as-channel is enabled, or as components of the conditioning vector.  We use log compression for nonnegative curvature-like summaries when enabled, which reduces the influence of isolated derivative spikes.

\paragraph{Patch construction.}
The default patching rule is a fixed sliding scaffold with patch length $\ell=16$ and at most $K=16$ patches per sequence.  Patch centers are selected in observed sample coordinates and do not depend on curvature, peaks, or learned boundary scores.  For variable-length sequences or incomplete final windows, the patch-validity mask $p_k$ marks unusable patches, which are excluded from aggregation, patch decorrelation, and optional reconstruction.  An adaptive-curvature schedule is implemented for ablations, but it is not the default mechanism used to support the main claim.  This distinction is central: curvature is used as evidence after support is fixed, not as the rule that defines support.

\paragraph{Geometry tensor.}
For each patch, the conditioning tensor contains two global descriptors, mean speed and mean curvature, repeated across all patch positions.  Depending on the configuration, additional patch-level descriptors include normalized patch center, patch index, affine metadata, mean patch curvature, mean patch slope, mean patch acceleration, and normalized segment length.  In the default retrieval setting, normalized patch position, mean slope, and mean acceleration are enabled.  Patch-level descriptors are multiplied by the validity mask so that padding cannot become an artificial geometry signal.

\paragraph{Pretext masking.}
The loader supports an identity mode and a masked-patch pretext mode.  In the identity setting, $x_{\mathrm{raw}}=y_{\mathrm{raw}}$ and the time mask is one on valid patch positions.  In the masked setting, contiguous spans inside valid patches are zeroed according to a fixed mask ratio and span length, with deterministic seeding by sample index when reproducible corruption is required.  This design makes the robustness and reconstruction settings use the same patch and geometry interface as the retrieval setting.

\section{Detailed Robustness Results}
\label{app:robustness}
\begin{figure}[h]
\centering
\IfFileExists{geopatch_stress_examples.png}{%
  \includegraphics[width=\linewidth,keepaspectratio]{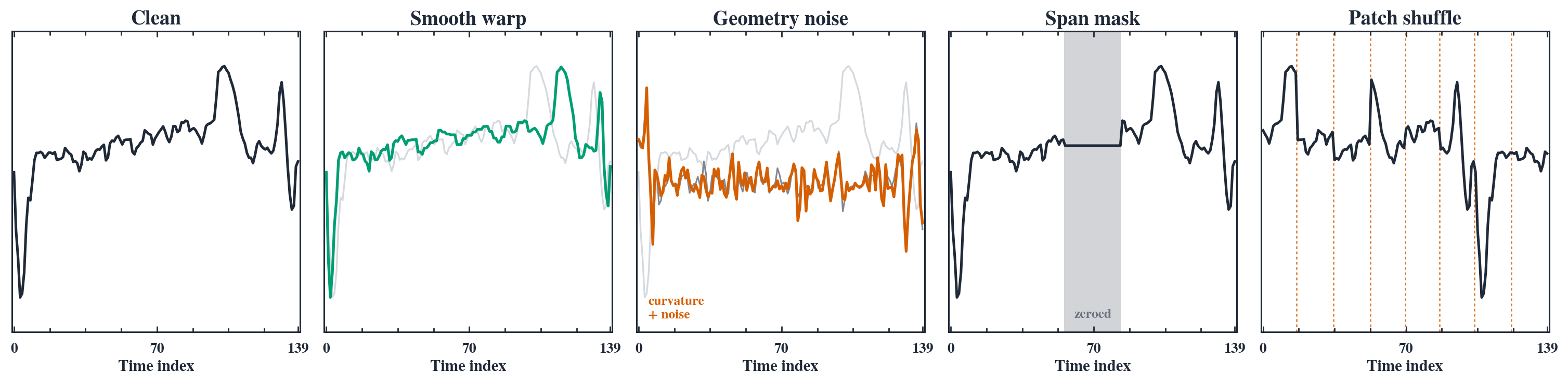}%
}{%
  \fbox{\parbox[c][1in][c]{\linewidth}{\centering\footnotesize Stress examples placeholder}}%
}
\caption{\textbf{Visual examples of the four stress conditions}, applied to a single ECG5000 sample (test index 42, $T{=}140$). From left to right: clean input, smooth temporal warp ($\sigma(t)=t+0.1T\sin(2\pi t/T)$), geometry/curvature noise applied to the derivative channel, local span masking over a $0.20T$ contiguous segment, and patch-order perturbation obtained by shuffling 8 chunks with seed 1337. The perturbations separate timing deformation, noisy conditioning, missing evidence, and disrupted patch structure.}
\label{fig:stress_examples}
\end{figure}

Table~\ref{tab:full_robustness} reports the full robustness study used to summarize Fig.~\ref{fig:ablations_robustness}(b).  We keep the clean rows in the table so that relative drops can be checked directly from the reported retrieval metrics rather than only from the heatmap.  The standard and strict columns coincide for these representative UEA datasets because the class-level test-gallery protocol does not introduce overlapping same-record candidates in the way ECG window retrieval does.

\vspace{-1.0em}

\noindent
\begin{minipage}{\textwidth}
\centering
\scriptsize
\setlength{\tabcolsep}{3.0pt}
\renewcommand{\arraystretch}{1.08}

\captionof{table}{
\textbf{Robustness study across representative UEA multivariate time-series datasets.}
Each dataset is evaluated under controlled perturbations that test temporal deformation, geometric noise, missing local evidence, and disrupted patch structure.  Cells report retrieval metrics under the shared nearest-neighbor protocol.
}
\label{tab:full_robustness}

\resizebox{\textwidth}{!}{
\begin{tabular}{
ll
cccc
cccc
}
\toprule
\multirow{2}{*}{\textbf{Dataset}}
&
\multirow{2}{*}{\textbf{Stress condition}}
&
\multicolumn{4}{c}{\textbf{Standard retrieval}}
&
\multicolumn{4}{c}{\textbf{Strict non-overlap retrieval}}
\\
\cmidrule(lr){3-6}
\cmidrule(lr){7-10}
&
&
\textbf{R@1}
&
\textbf{R@5}
&
\textbf{mAP}
&
\textbf{MRR}
&
\textbf{R@1}
&
\textbf{R@5}
&
\textbf{mAP}
&
\textbf{MRR}
\\
\midrule
\multirow{5}{*}{AbnormalHeartbeat}
& Clean input                         & 0.6439 & 0.9463 & 0.6253 & 0.7799 & 0.6439 & 0.9463 & 0.6253 & 0.7799 \\
& Smooth temporal warp                & 0.6049 & 0.9220 & 0.6121 & 0.7469 & 0.6049 & 0.9220 & 0.6121 & 0.7469 \\
& Geometry / curvature noise          & 0.6683 & 0.9463 & 0.6212 & 0.7868 & 0.6683 & 0.9463 & 0.6212 & 0.7868 \\
& Local span masking                  & 0.6634 & 0.9659 & 0.6271 & 0.7915 & 0.6634 & 0.9659 & 0.6271 & 0.7915 \\
& Patch-order perturbation            & 0.6634 & 0.9610 & 0.6252 & 0.7876 & 0.6634 & 0.9610 & 0.6252 & 0.7876 \\
\midrule
\multirow{5}{*}{ArticularyWordRecognition}
& Clean input                         & 0.7400 & 0.9033 & 0.4047 & 0.8168 & 0.7400 & 0.9033 & 0.4047 & 0.8168 \\
& Smooth temporal warp                & 0.7433 & 0.9233 & 0.4427 & 0.8285 & 0.7433 & 0.9233 & 0.4427 & 0.8285 \\
& Geometry / curvature noise          & 0.7467 & 0.9033 & 0.4082 & 0.8195 & 0.7467 & 0.9033 & 0.4082 & 0.8195 \\
& Local span masking                  & 0.4900 & 0.8033 & 0.2517 & 0.6295 & 0.4900 & 0.8033 & 0.2517 & 0.6295 \\
& Patch-order perturbation            & 0.7033 & 0.9033 & 0.3951 & 0.7909 & 0.7033 & 0.9033 & 0.3951 & 0.7909 \\
\midrule
\multirow{5}{*}{BasicMotions}
& Clean input                         & 0.9000 & 0.9750 & 0.6670 & 0.9328 & 0.9000 & 0.9750 & 0.6670 & 0.9328 \\
& Smooth temporal warp                & 0.7750 & 0.9500 & 0.6359 & 0.8460 & 0.7750 & 0.9500 & 0.6359 & 0.8460 \\
& Geometry / curvature noise          & 0.9000 & 0.9750 & 0.6720 & 0.9325 & 0.9000 & 0.9750 & 0.6720 & 0.9325 \\
& Local span masking                  & 0.9750 & 1.0000 & 0.6656 & 0.9800 & 0.9750 & 1.0000 & 0.6656 & 0.9800 \\
& Patch-order perturbation            & 0.9250 & 0.9750 & 0.6666 & 0.9456 & 0.9250 & 0.9750 & 0.6666 & 0.9456 \\
\midrule
\multirow{5}{*}{ECG5000}
& Clean input                         & 0.9184 & 0.9609 & 0.7373 & 0.9389 & 0.9184 & 0.9609 & 0.7373 & 0.9389 \\
& Smooth temporal warp                & 0.8956 & 0.9524 & 0.6947 & 0.9231 & 0.8956 & 0.9524 & 0.6947 & 0.9231 \\
& Geometry / curvature noise          & 0.9171 & 0.9598 & 0.7348 & 0.9375 & 0.9171 & 0.9598 & 0.7348 & 0.9375 \\
& Local span masking                  & 0.8678 & 0.9440 & 0.5814 & 0.9045 & 0.8678 & 0.9440 & 0.5814 & 0.9045 \\
& Patch-order perturbation            & 0.9080 & 0.9587 & 0.7286 & 0.9324 & 0.9080 & 0.9587 & 0.7286 & 0.9324 \\
\midrule
\multirow{5}{*}{Handwriting}
& Clean input                         & 0.4200 & 0.7671 & 0.1165 & 0.5732 & 0.4200 & 0.7671 & 0.1165 & 0.5732 \\
& Smooth temporal warp                & 0.3565 & 0.6788 & 0.1017 & 0.5015 & 0.3565 & 0.6788 & 0.1017 & 0.5015 \\
& Geometry / curvature noise          & 0.4176 & 0.7694 & 0.1173 & 0.5711 & 0.4176 & 0.7694 & 0.1173 & 0.5711 \\
& Local span masking                  & 0.1259 & 0.3929 & 0.0631 & 0.2573 & 0.1259 & 0.3929 & 0.0631 & 0.2573 \\
& Patch-order perturbation            & 0.4071 & 0.7588 & 0.1153 & 0.5619 & 0.4071 & 0.7588 & 0.1153 & 0.5619 \\
\midrule
\multirow{5}{*}{NATOPS}
& Clean input                         & 0.6722 & 0.9444 & 0.4533 & 0.7792 & 0.6722 & 0.9444 & 0.4533 & 0.7792 \\
& Smooth temporal warp                & 0.7333 & 0.9500 & 0.4463 & 0.8256 & 0.7333 & 0.9500 & 0.4463 & 0.8256 \\
& Geometry / curvature noise          & 0.6833 & 0.9611 & 0.4531 & 0.7880 & 0.6833 & 0.9611 & 0.4531 & 0.7880 \\
& Local span masking                  & 0.4000 & 0.8167 & 0.2260 & 0.5781 & 0.4000 & 0.8167 & 0.2260 & 0.5781 \\
& Patch-order perturbation            & 0.6611 & 0.9333 & 0.4464 & 0.7725 & 0.6611 & 0.9333 & 0.4464 & 0.7725 \\
\midrule
\multirow{5}{*}{UWaveGestureLibrary}
& Clean input                         & 0.7781 & 0.9219 & 0.3446 & 0.8396 & 0.7781 & 0.9219 & 0.3446 & 0.8396 \\
& Smooth temporal warp                & 0.7688 & 0.9219 & 0.3523 & 0.8341 & 0.7688 & 0.9219 & 0.3523 & 0.8341 \\
& Geometry / curvature noise          & 0.7812 & 0.9187 & 0.3403 & 0.8394 & 0.7812 & 0.9187 & 0.3403 & 0.8394 \\
& Local span masking                  & 0.6937 & 0.8906 & 0.3253 & 0.7820 & 0.6937 & 0.8906 & 0.3253 & 0.7820 \\
& Patch-order perturbation            & 0.7156 & 0.9187 & 0.3262 & 0.7986 & 0.7156 & 0.9187 & 0.3262 & 0.7986 \\
\midrule

\multicolumn{10}{l}{\textit{Average percentage drop relative to clean input; lower absolute drop indicates stronger robustness.}} \\
\midrule
\multicolumn{2}{l}{Smooth temporal warp}        &  4.2\% &  2.1\% &  2.2\% &  3.0\% &  4.2\% &  2.1\% &  2.2\% &  3.0\% \\
\multicolumn{2}{l}{Geometry / curvature noise}  & -0.9\% & -0.2\% &  0.0\% & -0.3\% & -0.9\% & -0.2\% &  0.0\% & -0.3\% \\
\multicolumn{2}{l}{Local span masking}          & 21.3\% & 10.6\% & 22.9\% & 15.4\% & 21.3\% & 10.6\% & 22.9\% & 15.4\% \\
\multicolumn{2}{l}{Patch-order perturbation}    &  1.9\% &  0.2\% &  1.6\% &  1.3\% &  1.9\% &  0.2\% &  1.6\% &  1.3\% \\

\bottomrule
\end{tabular}
}

\begin{minipage}{0.96\textwidth}
\footnotesize
\textbf{Stress definitions.}
Smooth temporal warp applies local stretching and compression while preserving sequence identity.
Geometry/curvature noise perturbs the geometry-conditioning channels without changing the patch scaffold.
Local span masking removes contiguous evidence from the input sequence.
Patch-order perturbation disrupts the ordering of patch tokens while preserving local patch content.
Together, these checks test whether GeoPatch remains stable under timing deformation, noisy geometry, missing evidence, and structural disruption.
\end{minipage}

\end{minipage}

\paragraph{Interpretation.}
Geometry noise leaves token support unchanged and perturbs only the descriptor pathway, so the small average changes in R@1, R@5, mAP, and MRR are consistent with the fixed-scaffold mechanism.  Smooth temporal warping changes the values inside patches, but it does not change the patch domains, which removes boundary drift.  Span masking is the most damaging condition because the relevant morphology is no longer present.  Patch-order perturbation is milder on average, suggesting that the tested retrieval settings depend more on local morphology than on exact absolute order.  Negative drops should be read as finite-sample or protocol effects, not as evidence that a perturbation is intrinsically beneficial.

\section{Optional Conditional Operator Architecture}
\label{app:operator}
Some full-training configurations include an optional conditional operator branch.  This branch is not required by the main retrieval mechanism; it is used when an auxiliary reconstruction loss is enabled.  Let $h_k$ denote the temporal content feature and $c_k$ the geometry condition. The conditional operator forms
\[
    \hat h_k^{(y)} = \Phi_\theta\left(h_k \parallel \psi_\phi(c_k)\right),
\]
where $\parallel$ denotes concatenation and $\psi_\phi$ projects the condition into the latent space of patch representations. The output is projected back to the input patch space through
\[
    \mathcal L_{\mathrm{rec}}
    =\frac{1}{|\mathcal M|}\sum_{k\in\mathcal M}
    \left\|\mathrm{Proj}_{\mathrm{out}}(\hat h_k^{(y)})-x_k^{(y)}\right\|_2^2 .
\]
This auxiliary loss encourages patch tokens to retain enough morphology to reconstruct local observations under the specified condition.  It is best viewed as a regularizer for low-data or sequence-aligned settings, not as the source of the retrieval gains reported in the main table.

\section{Empirical Characterization of the Stability Bound}
\label{app:theorem1_emp}
Theorem~\ref{thm:drift} gives a conditional upper bound parameterized by Lipschitz constants of the geometry branch, patch encoder, aggregator, and normalization.  These constants are estimated from typical perturbations on the operating set; the resulting numerical bound is therefore a scale-correct characterization rather than a tight worst-case certificate.  We sweep perturbation magnitude $\eta\in\{0.05,0.10,0.15,0.20,0.25,0.30,0.40\}$ on NSRDB and track $10{,}000$ patches per setting, reporting the median, mean, and 90th-percentile of the ratio between observed displacement and the evaluated bound.

\begin{figure}[h]
\centering
\IfFileExists{geopatch_theorem1_verification.png}{%
  \includegraphics[width=0.7\linewidth,keepaspectratio]{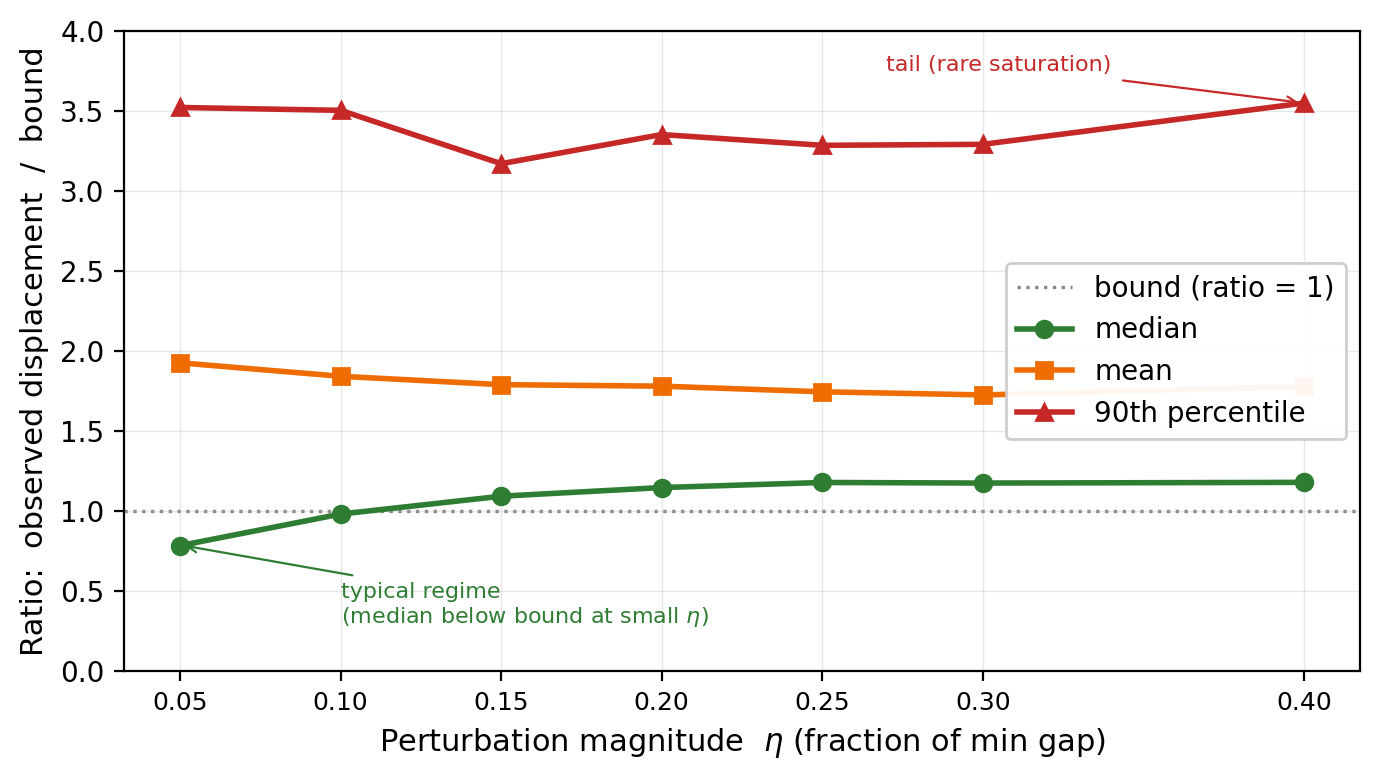}%
}{%
  \fbox{\parbox[c][1in][c]{0.7\linewidth}{\centering\footnotesize Stability-bound characterization placeholder}}%
}
\caption{\textbf{Empirical bound tightness on NSRDB.} Median, mean, and 90th-percentile ratios of observed displacement to the evaluated bound across perturbation levels.  The median characterizes the typical operating regime; the upper tail reflects rare saturation events at attention layers where local Lipschitz estimates understate the worst-case constant.}
\label{fig:theorem1_emp}
\end{figure}

The bound grows linearly with $\eta$, and the median ratio remains within a small constant factor across the swept range.  This is the correct scaling behavior for the bound under the stated assumptions; the long upper tail is expected for softmax-like aggregators where rare local saturation can transiently increase effective Lipschitz constants.  Reporting the median, mean, and 90th-percentile gives a more informative audit than asking whether every perturbation satisfies a worst-case inequality based on empirical Lipschitz estimates.

\section{Standard-vs-Strict Operating-Point Trade-off}
\label{app:pareto}
Figure~\ref{fig:pareto_appendix} plots the 22 ablations from Fig.~\ref{fig:ablations_robustness}(a) in Standard R@1 and Strict non-overlap R@1 coordinates, separately for Base scoring and MaxSim reranking.  The dashed Pareto frontier traces non-dominated operating points.  The full model in Base mode lies closer to the middle of the frontier, whereas the MaxSim version shifts toward the Standard-retrieval corner.  This is the empirical version of the monotonicity statement in Proposition~\ref{prop:maxsim}: local-surface scoring can improve one definition of retrieval while weakening another.

\begin{figure}[h]
\centering
\IfFileExists{standard_nonoverlap_pareto.png}{%
  \includegraphics[width=\linewidth,keepaspectratio]{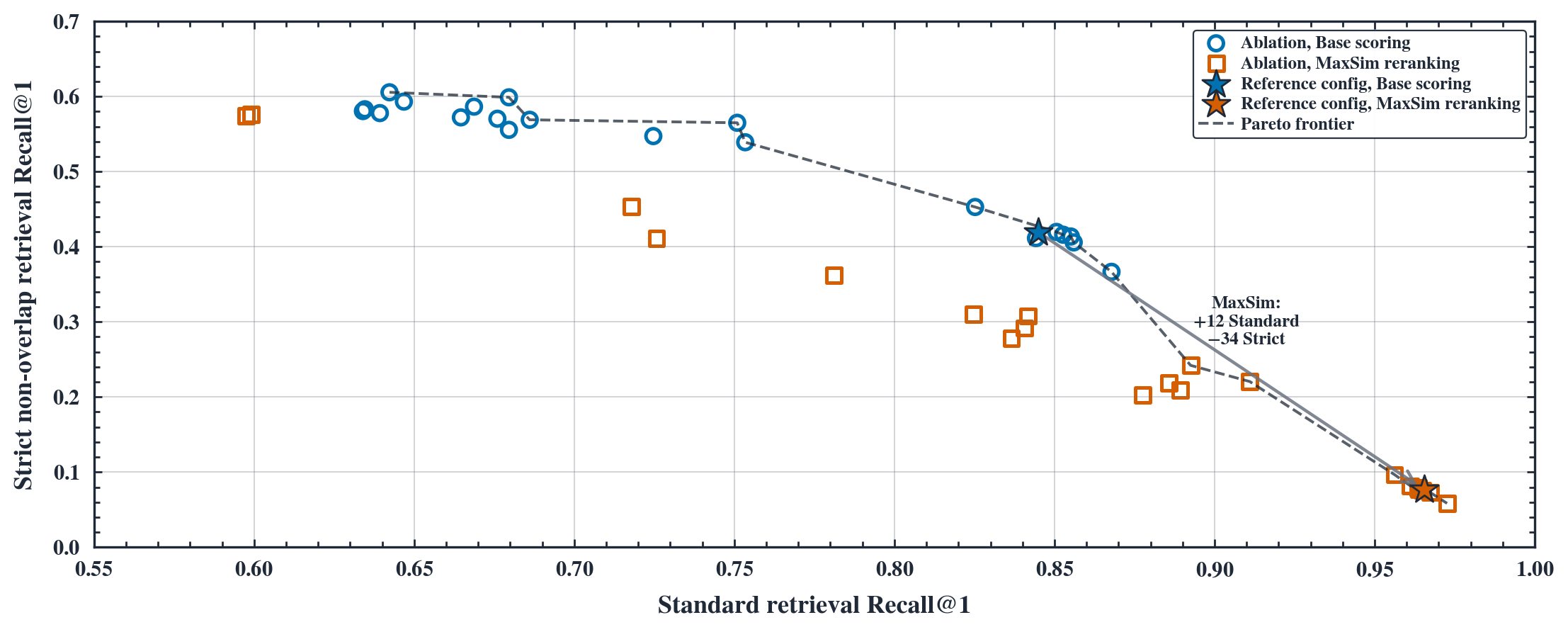}%
}{%
  \fbox{\parbox[c][1in][c]{\linewidth}{\centering \footnotesize Pareto plot placeholder}}%
}
\caption{\textbf{Standard-vs-Strict operating-point trade-off on NSRDB.} Circles denote Base scoring, squares denote MaxSim reranking, and stars denote the full configuration. The arrow between the full-model settings shows the shift induced by MaxSim. The plot should be read as an operating-point analysis, not as evidence that one scorer dominates all protocols.}
\label{fig:pareto_appendix}
\end{figure}

\section{Computational Complexity}
\label{app:complexity}
Let $A$ be the number of alignment anchors, $G$ the uniform warp grid size, $K$ the maximum number of patches, $M$ the patch length, $D$ the global embedding dimension, $d$ the per-patch embedding dimension, $N$ the number of examples evaluated, $B$ the training batch size, and $k'$ the number of reranking candidates. Warp construction sorts anchors in $O(A\log A)$, applies monotone projection/interpolation in $O(A+G)$, and computes the first and second finite differences, curvature values, and optional fixed-window smoothing in $O(G)$ time and $O(G)$ memory. Extracting the $K$ fixed-length patch views from the sampled warp/curvature arrays costs $O(KM)$ time and memory per example. Thus the default finite-difference geometry path adds $O(A\log A+G+KM)$ preprocessing per example; with fixed $G,K,M$, this is linear in the number of anchors. If the optional monotone trend-filter solver is used instead of the default PCHIP path, its extra cost is $O(TG)$, where $T$ is the number of solver iterations. For scoring, the in-batch supervised contrastive objective forms a $B\times B$ similarity matrix and costs $O(B^2D)$ time and $O(B^2)$ memory. Full retrieval over $N$ examples costs $O(N^2D)$ time for global cosine scores and $O(N^2\log N)$ time if the full ranking is materialized; top-$k'$ selection can reduce the ranking step to $O(N^2\log k')$. The implementation chunks the score matrix, so peak working memory is $O(CN)$ for chunk size $C$, plus the output ranking. The optional patch-level late interaction scores only the top $k'$ global candidates per query. A single query-candidate refinement computes a $K\times K$ patch similarity matrix in $O(K^2d)$ time and $O(K^2)$ memory, making the full reranking overhead $O(Nk'K^2d)$ time with $O(bk'K^2)$ working memory for refinement batch size $b$. Since $k'$ and $K$ are fixed hyperparameters and $k'\ll N$, the reranker adds a controlled linear-in-$N$ overhead after the global retrieval pass.

\section{Proofs}
\label{app:proofs}

\paragraph{Notation.}
Let $\mathcal P=\{\Omega_k\}_{k=1}^K$ denote a patch family, and write $V_{\mathcal P}(y)=\{y|_{\Omega_k}\}_{k=1}^K$ for the patch-value collection induced by $\mathcal P$.  We write $R(\mathcal P,y)$ for the functional that extracts $V_{\mathcal P}(y)$ from the continuous signal $y$ and then applies patch encoding and aggregation.  For GeoPatch, $\mathcal P=\mathcal S$ is fixed and only $V_{\mathcal S}(y)$ and the descriptor sequence $(g_k)_{k=1}^K$ vary with the observation.

\begin{proposition}[Boundary drift under geometry-dependent segmentation]
\label{prop:noninv}
Fix canonical content $x$ and consider content-matched observations $y_A,y_B$ generated under admissible warps.  Suppose a representation has the adaptive form in Eq.~\eqref{eq:adaptive_rep}, where $K(y)$ and $\Omega_k(y)$ are determined by a geometry-dependent rule, and suppose the aggregator is not constant over patch collections with different support, order, cardinality, or content.  If there exist content-matched observations such that $\mathcal P(y_A)\neq\mathcal P(y_B)$, then strict warp invariance cannot be guaranteed over the admissible class.
\end{proposition}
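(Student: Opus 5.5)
The plan is to argue by contradiction and to read ``strict warp invariance over the admissible class'' as the requirement that $z_{\mathrm{seg}}(y_A)=z_{\mathrm{seg}}(y_B)$ for \emph{every} encoder $f$ and every aggregator $\rho$ in the family considered. The family is fixed only by the stated hypotheses: adaptive form, geometry-dependent $\mathcal P$, and an aggregator that is not constant across collections differing in support, order, cardinality, or content. Under this reading, ``cannot be guaranteed'' means that the hypotheses alone do not force equality. It therefore suffices to exhibit one admissible pair $(f,\rho)$ and one content-matched pair $(y_A,y_B)$ with $\mathcal P(y_A)\neq\mathcal P(y_B)$ and $z_{\mathrm{seg}}(y_A)\neq z_{\mathrm{seg}}(y_B)$.

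\textbf{Step 1.} I use the boundary--feature decomposition of Eq.~\eqref{eq:drift_decomp} as the organizing identity rather than as an inequality. I write
\[
z_{\mathrm{seg}}(y_A)-z_{\mathrm{seg}}(y_B)=\big[R(\mathcal P(y_A),y_A)-R(\mathcal P(y_B),y_A)\big]+\big[R(\mathcal P(y_B),y_A)-R(\mathcal P(y_B),y_B)\big].
\]
The second bracket involves a single patch family. The first bracket is the boundary-drift term.

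\textbf{Step 2.} Because $\mathcal P(y_A)\neq\mathcal P(y_B)$, the two families differ in at least one of $K$, the ordering, or some support $\Omega_k$. In each case, the collections $V_{\mathcal P(y_A)}(y_A)$ and $V_{\mathcal P(y_B)}(y_A)$ are distinct objects: they are restrictions of the same signal to non-identical domains, or they have different cardinality. Applying the non-constancy hypothesis on $\rho$ to this pair of collections, I can choose the aggregator, or a patch encoder $f$ that separates the differing domains, so that the boundary-drift bracket is nonzero. Here $f$ can be taken injective on patch values together with their domain length or position. A concrete choice is $\rho=$ mean of $f$ with $f(v)=(\text{length of } v,\ \text{mean of } v)$; a cardinality or support change then alters the mean-length coordinate.

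\textbf{Step 3.} I must rule out the feature-drift bracket exactly cancelling the boundary-drift bracket. Since the statement only asks that invariance cannot be \emph{guaranteed}, I handle this by a perturbation or choice argument. If cancellation occurs for one admissible $(f,\rho)$, I replace $\rho$ by $\rho+\epsilon\,\chi$, where $\chi$ detects only the cardinality or support signature of the collection. The signature of $\mathcal P(y_A)$ differs from that of $\mathcal P(y_B)$, so $\chi$ separates the two outputs. Meanwhile $\chi$ is identical across the two terms of the feature-drift bracket, which share the family $\mathcal P(y_B)$. Hence the total difference becomes nonzero for all but at most one value of $\epsilon$. This yields an admissible encoder violating $z_{\mathrm{seg}}(y_A)=z_{\mathrm{seg}}(y_B)$ and completes the argument.

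\textbf{Main obstacle.} The hard step is Step 3, together with making the non-constancy hypothesis precise enough to be used. As stated, the hypothesis says the aggregator is ``not constant'' over differing collections. Read literally, this could still allow $\rho$ to coincide on the specific pair arising from $y_A,y_B$. I will therefore interpret it as ``separates some collections differing in support, order, or cardinality'' and use the signature-detector construction to transfer this separation to the specific pair.
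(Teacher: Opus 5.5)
Your proof is correct under the reading you adopt, and it takes a different, more careful route than the paper's. The paper fixes the representation and argues informally. If $\mathcal P(y_A)\neq\mathcal P(y_B)$, the aggregator receives collections differing in cardinality, support, or order, which are not perturbations in a shared coordinate system, and ``since the aggregator is non-constant on such collections'' some content-matched pair must give different outputs. That last inference is exactly the leap you flag: non-constancy on some collections says nothing about the particular pair generated by $y_A,y_B$. The paper also never considers feature drift cancelling boundary drift.

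You instead put the quantifier on the model. You read ``cannot be guaranteed'' as ``not implied by the hypotheses,'' which matches the paper's closing remark that smoothness of $f$ and $\rho$ alone does not suffice. You then build an explicit witness: a domain-aware $f$ together with $\rho+\epsilon\chi$. The term $\chi$ cancels inside the feature-drift bracket of \eqref{eq:drift_decomp}, where both terms share the family $\mathcal P(y_B)$, but not in the boundary-drift bracket. The output difference is therefore affine in $\epsilon$ with nonzero slope.

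The paper's version is shorter and conveys the structural intuition. Its proof, however, asserts something about the given representation that its argument does not establish. Yours gives a genuine proof of the weaker, existential statement.

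Two small repairs are needed.
\begin{itemize}
\item Your concrete Step~2 choice can fail. For families tiling $[0,1]$ with equal $K$, the mean length is $1/K$ for both, so a pure boundary shift is invisible, and mean pooling ignores order. Step~3 does not depend on Step~2, so drop it and use $\rho+\epsilon\chi$ from the start.
\item $\chi$ can only read the signature if $f$ records each patch's full support: both endpoints, plus the index if order is part of $\mathcal P$. ``Length or position'' is not enough. Requiring this is legitimate, since in the paper's formalism $y|_{\Omega}$ carries its domain $\Omega$.
\end{itemize}
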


\begin{proof}
The representation depends on the observation through two arguments: the patch family $\mathcal P(y)$ and the values $V_{\mathcal P(y)}(y)$.  If $\mathcal P(y_A)\neq\mathcal P(y_B)$, then the aggregator receives either a different number of tokens, tokens defined on different supports, or tokens in a different order.  These inputs do not differ only by a small vector perturbation in a shared coordinate system.  Since the aggregator is non-constant on such collections, there exists at least one pair of content-matched observations for which the outputs differ.  Therefore invariance cannot be guaranteed solely from smoothness of the local encoder or aggregator.
\end{proof}

\begin{proposition}[Affine equivariance and normalized residual invariance]
\label{prop:affine}
Let $\tau_{\mathrm{new}}=\alpha\tau+\beta$ with $\alpha>0$ and $\beta\in\R$, and let $r(\tau)=\tau-\Proj(\tau)$ for orthogonal projection onto $\Aff$.  Then $r(\tau_{\mathrm{new}})=\alpha r(\tau)$.  If in addition $\alpha_{\mathrm{loc}}$ satisfies the affine-rescaling property $\alpha_{\mathrm{loc}}(\alpha\tau+\beta)=\alpha\,\alpha_{\mathrm{loc}}(\tau)$ for all $\alpha>0,\beta\in\R$, then $\mathcal R(\tau_{\mathrm{new}})=\mathcal R(\tau)$.
\end{proposition}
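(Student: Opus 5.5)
The argument rests on two facts: $\Proj$ is linear, and $\Proj$ fixes every element of $\Aff$. Both follow because $\Aff$ is a closed two-dimensional linear subspace of $L^2([0,1])$, spanned by the functions $u\mapsto u$ and $u\mapsto 1$. Orthogonal projection onto a closed subspace is a linear, idempotent operator whose range is the subspace. Consequently $r=\mathrm{Id}-\Proj$ is linear and annihilates $\Aff$.

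Here the constant $\beta$ is read as the constant function $u\mapsto\beta$. This function lies in $\Aff$, taking $a=0$ and $b=\beta$.

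\textbf{Equivariance of $r$.} By linearity, $r(\alpha\tau+\beta)=\alpha\,r(\tau)+r(\beta)$. Since $\beta\in\Aff$, we have $\Proj(\beta)=\beta$, so $r(\beta)=0$. Hence $r(\tau_{\mathrm{new}})=\alpha\,r(\tau)$. The positivity $\alpha>0$ is not needed for this step; any $\alpha\in\R$ works.

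\textbf{Invariance of $\mathcal R$.} Substitute into the definition in Eq.~\eqref{eq:affineresid} and apply the assumed rescaling property of $\alpha_{\mathrm{loc}}$:
\[
\mathcal R(\tau_{\mathrm{new}})=\frac{r(\alpha\tau+\beta)}{\alpha_{\mathrm{loc}}(\alpha\tau+\beta)}=\frac{\alpha\,r(\tau)}{\alpha\,\alpha_{\mathrm{loc}}(\tau)}=\mathcal R(\tau).
\]
Cancelling $\alpha$ is legitimate because $\alpha>0$. The denominators are nonzero because $\alpha_{\mathrm{loc}}$ is assumed positive.

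There is no real obstacle in this proof. The only points needing care are confirming that $\Aff$ is closed, which holds because it is finite-dimensional, and that $\mathcal R$ is well defined. The latter depends on the standing hypothesis that $\alpha_{\mathrm{loc}}$ is positive on the relevant $\tau$. If $\tau$ is itself affine, then $r(\tau)=0$; a natural homogeneous choice such as $\alpha_{\mathrm{loc}}=\norm{r(\cdot)}$ would vanish there. I would therefore note that the claim is stated under the positivity hypothesis on $\alpha_{\mathrm{loc}}$, which excludes such degenerate inputs.
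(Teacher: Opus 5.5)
Your proof is correct and matches the paper's argument essentially line for line: linearity of $\Proj$ together with $\Proj(\beta)=\beta$ gives $r(\alpha\tau+\beta)=\alpha\,r(\tau)$, and the affine-rescaling property of $\alpha_{\mathrm{loc}}$ then cancels the factor $\alpha$. Your caveat on well-definedness is also sound and can be sharpened: taking $\tau\equiv c$ and $\beta=(1-\alpha)c$ in the rescaling property gives $\alpha_{\mathrm{loc}}(c)=\alpha\,\alpha_{\mathrm{loc}}(c)$, hence $\alpha_{\mathrm{loc}}(c)=0$, so positivity of $\alpha_{\mathrm{loc}}$ can only hold away from constant $\tau$, a restriction the paper leaves implicit.
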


\begin{proof}
$\Aff$ is a $2$-dimensional linear subspace of $L^2([0,1])$ that contains all constant functions; in particular, $\beta\in\Aff$ for every $\beta\in\R$.  Orthogonal projection onto a closed linear subspace is itself linear, so
\[
    \Proj(\alpha\tau+\beta)=\alpha\,\Proj(\tau)+\Proj(\beta)=\alpha\,\Proj(\tau)+\beta,
\]
where the second equality uses $\Proj(\beta)=\beta$ since $\beta\in\Aff$.  Substituting,
\[
    r(\alpha\tau+\beta)=\alpha\tau+\beta-\big(\alpha\,\Proj(\tau)+\beta\big)=\alpha\,(\tau-\Proj(\tau))=\alpha\, r(\tau).
\]
The affine-rescaling property of $\alpha_{\mathrm{loc}}$ then gives
\[
    \mathcal R(\alpha\tau+\beta)=\frac{\alpha\, r(\tau)}{\alpha\,\alpha_{\mathrm{loc}}(\tau)}=\mathcal R(\tau).
    \qedhere
\]
\end{proof}

\begin{theorem}[Confidence-weighted modulation stability]
\label{thm:drift}
Using the notation of Section~\ref{sec:GeoPatch}, assume that the scaffold and valid set $\mathcal M$ are fixed under descriptor perturbations.  Assume that the geometry encoder $\gamma_\theta$ is $L_\gamma$-Lipschitz, the patch encoder $f_\theta$ is $L_f$-Lipschitz in its fused input, the aggregator $A_\theta$ is $L_A$-Lipschitz with respect to the average valid-patch norm on the bounded operating region, and the final normalization is $L_N$-Lipschitz.  Then for any per-patch descriptor perturbation $(\delta g_k)_{k=1}^K$,
\[
    \norm{T\big((g_k+\delta g_k)\big)-T\big((g_k)\big)}_2
    \le
    L_NL_AL_f L_\gamma\cdot
    \frac{1}{|\mathcal M|}\sum_{k\in\mathcal M}a_k\norm{\delta g_k}_2 .
\]
\end{theorem}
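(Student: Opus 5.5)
The plan is a direct composition-of-Lipschitz-maps argument along the factorization in Eq.~\eqref{eq:GeoPatch_factorization}. The input $y$ and scaffold $\mathcal S$ are fixed, so the content features $h_k=\psi_\theta(y_k)$, the masks $m_k$ and the valid set $\mathcal M$ do not move. The perturbation therefore enters only through $c_k=\gamma_\theta(g_k)$. I will propagate $\delta g_k$ through four stages: geometry encoder, fused patch encoder, gate, and then aggregator and normalization. At each stage I pick up one constant.

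\textbf{Step 1 (geometry branch and patch encoder).} Write $c_k'=\gamma_\theta(g_k+\delta g_k)$. Lipschitzness of $\gamma_\theta$ gives $\norm{c_k'-c_k}_2\le L_\gamma\norm{\delta g_k}_2$. I read the hypothesis ``$f_\theta$ is $L_f$-Lipschitz in its fused input'' as Lipschitzness of the composite $c\mapsto f_\theta(\mathrm{Fuse}_\theta(h_k,c))$ with $h_k$ held fixed; the fusion constant is absorbed into $L_f$. This yields $\norm{z_k'-z_k}_2\le L_fL_\gamma\norm{\delta g_k}_2$.

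\textbf{Step 2 (gate).} By Eq.~\eqref{eq:gate}, $a_k$ is a function of $\xi_k$ only. $\xi_k$ is built from the within-patch affine residual $r_k$, the validity $\bar c_k$ and the curvature $\bar\kappa_k$, all computed from the observed patch content $y_k$ rather than from the conditioning vector $g_k$. Hence $a_k$ is unchanged by the descriptor perturbation, which is exactly what ``$T$ with $(a_k)$ fixed'' in the definition of $T$ encodes. Then $\tilde z_k'-\tilde z_k=m_ka_k(z_k'-z_k)$, so $\norm{\tilde z_k'-\tilde z_k}_2\le a_kL_fL_\gamma\norm{\delta g_k}_2$ for $k\in\mathcal M$, and the difference is $0$ for $k\notin\mathcal M$.

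\textbf{Step 3 (aggregation and normalization).} I use the assumed Lipschitz property of $A_\theta$ with respect to the average valid-patch norm:
\[
\norm{A_\theta(\tilde Z')-A_\theta(\tilde Z)}_2\le L_A\frac{1}{|\mathcal M|}\sum_{k\in\mathcal M}\norm{\tilde z_k'-\tilde z_k}_2 .
\]
For the set aggregator in Eq.~\eqref{eq:deepsets} this follows from $\phi_\theta,\rho_\theta$ Lipschitz and the masked mean, with $\varepsilon$ absorbed into the constant. Since $\mathcal M$ is fixed, the denominator is the same on both sides. Substituting Step 2 and then applying the $L_N$-Lipschitz normalization gives the stated bound.

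\textbf{Main obstacle.} The delicate point is Step 2: the bound requires that the gate be independent of $g_k$. If some configuration feeds curvature descriptors into both $g_k$ and $\xi_k$, $a_k$ would itself move. I would first argue that the theorem holds the gates as fixed arguments of $T$. As a fallback, I would add a term $L_a\norm{z_k}\norm{\delta g_k}$ using boundedness of the operating region. A secondary issue is that $L_N$ is only finite away from the origin, which is why the bounded operating region is assumed.
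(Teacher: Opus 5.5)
Your proposal is correct and follows the paper's proof essentially step for step. Both arguments propagate $\delta g_k$ through $\gamma_\theta$, then through $f_\theta\circ\mathrm{Fuse}_\theta$ with $h_k$ fixed, multiply by the unchanged gate $m_k a_k$, average over the fixed valid set $\mathcal M$, and finish with the constants $L_A$ and $L_N$. You are more explicit than the paper on two points it leaves implicit: that the fusion constant is absorbed into $L_f$, and that $a_k$ is held fixed as an argument of $T$. Both readings match how the paper's proof uses these quantities.
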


\begin{proof}
Because the scaffold and valid set are fixed, perturbing $g_k$ changes only the geometry pathway.  Let $c_k=\gamma_\theta(g_k)$ and $c_k'=\gamma_\theta(g_k+\delta g_k)$ denote the geometry-encoded vectors before and after perturbation.  By Lipschitzness of $\gamma_\theta$,
\[
    \norm{c_k'-c_k}_2\le L_\gamma\norm{\delta g_k}_2 .
\]
Let $z_k=f_\theta(\mathrm{Fuse}_\theta(h_k,c_k))$ and $z_k'=f_\theta(\mathrm{Fuse}_\theta(h_k,c_k'))$ denote the patch tokens.  The patch encoder is Lipschitz in its modulation input, and the gate $a_k$ enters multiplicatively in $\tilde z_k=m_k a_k z_k$, so
\[
    \norm{\tilde z_k'-\tilde z_k}_2\le m_k a_k L_f L_\gamma\norm{\delta g_k}_2 .
\]
Averaging over valid patches ($k\in\mathcal M$ implies $m_k=1$) gives
\[
    \frac{1}{|\mathcal M|}\sum_{k\in\mathcal M}\norm{\tilde z_k'-\tilde z_k}_2
    \le
    L_f L_\gamma\,\frac{1}{|\mathcal M|}\sum_{k\in\mathcal M}a_k\norm{\delta g_k}_2 .
\]
Applying the Lipschitz constants of $A_\theta$ and the final normalization yields the stated bound.
\end{proof}

\begin{theorem}[Top-$R$ stability under score drift]
\label{thm:topk}
Let $\hat z_q$ and $\hat z_c$ be unit-normalized query and candidate embeddings.
Suppose perturbations satisfy $\norm{\delta_q}_2\le\eta$ and $\norm{\delta_c}_2\le\eta$.  Then for any single query--candidate pair,
\[
    \abs{(\hat z_q+\delta_q)^\top(\hat z_c+\delta_c)-\hat z_q^\top\hat z_c}
    \le 2\eta+\eta^2.
\]
If a positive candidate has a score margin larger than $4\eta+2\eta^2$ above every candidate below rank $R$, then recall@$R$ is unchanged under the perturbation.
\end{theorem}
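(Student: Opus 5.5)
The score-drift bound comes first and is elementary; the ranking statement then follows by applying it to the relevant pairs. For a single query--candidate pair, I would expand the bilinear form:
\[
(\hat z_q+\delta_q)^\top(\hat z_c+\delta_c)-\hat z_q^\top\hat z_c
=\hat z_q^\top\delta_c+\delta_q^\top\hat z_c+\delta_q^\top\delta_c .
\]
Each term is bounded by Cauchy--Schwarz. Because $\norm{\hat z_q}_2=\norm{\hat z_c}_2=1$, the first two terms are at most $\eta$ each and the cross term is at most $\eta^2$. The triangle inequality then gives $2\eta+\eta^2$.

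For the recall statement, I would note that the perturbations apply to the embeddings themselves. So the query perturbation $\delta_q$ is shared across all pairs, and each candidate $c$ has its own $\delta_c$ with $\norm{\delta_c}_2\le\eta$. The first part therefore applies uniformly: every perturbed score $s'(q,c)$ lies within $2\eta+\eta^2$ of its clean score $s(q,c)$.

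Now fix the positive candidate $p$ and any candidate $c$ ranked below $R$ in the clean ordering. By hypothesis $s(q,p)-s(q,c)>4\eta+2\eta^2$. Combining the two one-sided drift bounds gives
\[
s'(q,p)-s'(q,c)\ge s(q,p)-s(q,c)-2(2\eta+\eta^2)>0 .
\]
So after perturbation, $p$ still strictly outscores every candidate that was below the cutoff. The margin threshold $4\eta+2\eta^2$ is exactly twice the single-pair drift, since the two scores being compared can each move by $2\eta+\eta^2$ in opposite directions.

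The step I expect to need the most care is turning ``beats all former below-cutoff candidates'' into ``remains in the top $R$.'' Candidates that were in the clean top $R$ could be reordered among themselves, and if $p$ was in the clean top $R$ its rank could in principle rise above $R$ through that reordering. The fix I would use is a counting argument: the number of candidates that outscore $p$ after perturbation is at most the number of clean top-$R$ items other than $p$, which is $R-1$, because every below-cutoff candidate stays strictly below $p$. Hence $p$'s rank is at most $R$, the hit indicator is preserved, and recall@$R$ is unchanged.

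Ties and self-match removal do not change this. Self-match removal only deletes a row entry, and the strict inequality rules out ties with below-cutoff candidates.
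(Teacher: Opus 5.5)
Your proof is correct and follows the paper's argument: the same three-term expansion bounded termwise by Cauchy--Schwarz, then the observation that the margin between the positive and any below-cutoff candidate shrinks by at most $2(2\eta+\eta^2)$. Your counting step makes explicit the passage from ``no below-cutoff candidate overtakes the positive'' to ``the positive stays in the top $R$,'' which the paper only asserts: at most the $R-1$ other clean top-$R$ items can outscore it afterward, and the hypothesis already places it in the clean top $R$, since it cannot have a positive margin over itself.
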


\begin{proof}
Expanding the perturbed score gives
\[
(\hat z_q+\delta_q)^\top(\hat z_c+\delta_c)-\hat z_q^\top\hat z_c
=\hat z_q^\top\delta_c+\delta_q^\top\hat z_c+\delta_q^\top\delta_c .
\]
By Cauchy--Schwarz and unit normalization, the three terms are bounded by $\eta$, $\eta$, and $\eta^2$, giving a per-pair score drift of at most $2\eta+\eta^2$.  Preserving the rank between the positive and any candidate below the cutoff requires the positive's score to remain above the negative's score after perturbation.  In the worst case the positive's score decreases by $2\eta+\eta^2$ while a lower-ranked candidate's score simultaneously increases by the same amount, so the relative margin shrinks by at most $2(2\eta+\eta^2)=4\eta+2\eta^2$.  If the initial margin exceeds this quantity, no candidate below the cutoff can overtake the positive and recall@$K$ is preserved.
\end{proof}

\begin{proposition}[Surface-similarity bias of max aggregation]
\label{prop:maxsim}
Let $S(q,c)$ be monotone non-decreasing in $M(q,c)=\max_{i,j}\cos(z_i^{(q)},z_j^{(c)})$, and assume that for a matched query $q$, the random variable $M(q,c_{\mathrm{same}})$ first-order stochastically dominates $M(q,c_{\mathrm{cross}})$, i.e.,
\[
    \Pr\!\big(M(q,c_{\mathrm{same}})>t\big) \ge \Pr\!\big(M(q,c_{\mathrm{cross}})>t\big)\quad \text{for all } t\in\R.
\]
Then $\mathbb E[S(q,c_{\mathrm{same}})] \ge \mathbb E[S(q,c_{\mathrm{cross}})]$, with strict inequality whenever the dominance is strict on a positive-measure interval where $S$ is strictly increasing.
\end{proposition}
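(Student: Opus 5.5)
The plan is to reduce the statement to the standard fact that first-order stochastic dominance orders the expectations of nondecreasing functions. I will use the tail-integral (layer-cake) representation, which is well suited here because $M$ is a maximum of cosines and therefore lies in $[-1,1]$.

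\textbf{Step 1: set-up.} Monotonicity of $S$ in $M$ means I can write $S(q,c)=h(M(q,c))$ for some nondecreasing $h:[-1,1]\to\R$. Here I assume that $S$ depends on $c$ only through $M$; this is the reading in which the statement is meaningful, and I will say so explicitly. A monotone function on a compact interval is bounded, so both expectations are finite and no integrability issue arises.

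\textbf{Step 2: tail-integral identity.} Take $h$ right-continuous; modifying it at its countably many jump points does not affect the argument if the atoms of $M$ are handled with the $\Pr(M\ge t)$ version. For any $[-1,1]$-valued random variable $M$, Fubini applied to $h(M)-h(-1)=\int_{(-1,1]}\mathbb 1[t<M]\,dh(t)$ gives
\[
    \mathbb E[h(M)] = h(-1) + \int_{(-1,1]} \Pr(M > t)\, dh(t) .
\]
Here $dh$ is the nonnegative Lebesgue--Stieltjes measure induced by $h$.

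\textbf{Step 3: compare.} Subtracting the identity for $c_{\mathrm{cross}}$ from the one for $c_{\mathrm{same}}$ gives
\[
    \mathbb E[S(q,c_{\mathrm{same}})]-\mathbb E[S(q,c_{\mathrm{cross}})] = \int_{(-1,1]} \big(\Pr(M_{\mathrm{same}}>t)-\Pr(M_{\mathrm{cross}}>t)\big)\, dh(t) .
\]
The integrand is nonnegative for every $t$ by the dominance hypothesis, and $dh\ge 0$, so the difference is nonnegative.

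\textbf{Step 4: strict inequality.} Suppose the dominance is strict for all $t$ in an interval $J$ of positive length on which $h$ is strictly increasing. Then every nondegenerate subinterval $[s,u]\subset J$ has $dh$-mass $h(u)-h(s)>0$, so $dh(J)>0$. Since the integrand is strictly positive on $J$, the integral over $J$ is positive, and hence so is the whole difference.

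\textbf{Main obstacle.} I expect Step 4 to be the delicate point. ``Strict on a positive-measure interval'' must be read as strict at every point of such an interval; strictness merely on a set of positive Lebesgue measure would not be enough, because $dh$ can be singular with respect to Lebesgue measure. The way through is exactly the observation above that strict monotonicity of $h$ on $J$ forces $dh(J)>0$. I will also note the alternative quantile-coupling argument as a sanity check. Set $M_{\mathrm{same}}=F_{\mathrm{same}}^{-1}(U)\ge F_{\mathrm{cross}}^{-1}(U)=M_{\mathrm{cross}}$ for a single uniform $U$, apply $h$ pointwise, and take expectations. This recovers the weak inequality immediately, though the strict case is cleaner in the Stieltjes form.
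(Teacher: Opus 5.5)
Your proposal is correct and follows the paper's argument: the tail-integral (Stieltjes) characterization of first-order dominance gives the weak inequality, and a strictly positive tail gap integrated against $dh$ on an interval where $S$ strictly increases gives strictness, which you spell out where the paper only cites the standard result. One small repair is needed: for right-continuous $h$ the identity should read $\mathbb E[h(M)]=h(-1)+\int_{(-1,1]}\Pr(M\ge t)\,dh(t)$, since your $\Pr(M>t)$ form pairs with left-continuous $h$; alternatively, the quantile coupling you mention gives both the weak and the strict claim for any nondecreasing $h$ with no continuity convention, because on the positive-probability event $F_{\mathrm{same}}(t)<U\le F_{\mathrm{cross}}(t)$ one has $M_{\mathrm{cross}}\le t<M_{\mathrm{same}}$.
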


\begin{proof}
First-order stochastic dominance combined with monotonicity of $S$ implies $\mathbb E[S\circ M_{\mathrm{same}}]\ge\mathbb E[S\circ M_{\mathrm{cross}}]$ by the standard integral characterization of stochastic dominance \citep{shaked2007stochastic}.  Strictness follows when the dominance gap $\Pr(M_s>t)-\Pr(M_x>t)$ is positive on an interval where $S$ is strictly increasing, since that gap then contributes a positive integrand to the expectation difference.  The result does not imply that MaxSim is undesirable; it states that MaxSim encodes a preference based on strongest local surface match.
\end{proof}

\begin{corollary}[No universal late-interaction scorer]
\label{cor:nflr}
A parameter-free late-interaction rule that monotonically rewards strongest local similarity cannot be expected to dominate both standard retrieval and strict non-overlap retrieval when the two protocols disagree about whether same-record local matches are valid positives.
\end{corollary}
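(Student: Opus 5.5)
The plan is to turn the informal corollary into a two-protocol comparison and derive it from Proposition~\ref{prop:maxsim}. A scorer here is a fixed, parameter-free rule $S(q,c)$ that is monotone non-decreasing in $M(q,c)=\max_{i,j}\cos(z_i^{(q)},z_j^{(c)})$. The two protocols are defined by their relevance labels on same-record (or overlapping) candidates $c_{\mathrm{same}}$. Standard retrieval labels them positives. Strict non-overlap retrieval labels them negatives, or removes them, and labels cross-record matched candidates $c_{\mathrm{cross}}$ as positives. The corollary's hypothesis that ``the two protocols disagree'' is exactly this sign flip on the $c_{\mathrm{same}}$ label. ``Dominate'' will mean doing at least as well as every competing scorer on the expected pairwise ordering between relevant and irrelevant candidates under both protocols simultaneously.

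First, I invoke Proposition~\ref{prop:maxsim}. Under the regime where $M(q,c_{\mathrm{same}})$ stochastically dominates $M(q,c_{\mathrm{cross}})$, which is the natural regime for overlapping windows, we get $\mathbb E[S(q,c_{\mathrm{same}})]\ge\mathbb E[S(q,c_{\mathrm{cross}})]$. Strict inequality holds when the dominance is strict on a set where $S$ increases.

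Second, I read this inequality under each protocol. Under Standard it orders the positive above the competing candidate, so it is favorable. Under Strict, $c_{\mathrm{same}}$ is a negative competing with the true positive $c_{\mathrm{cross}}$. The same inequality therefore says the scorer, in expectation, ranks a negative above the positive. The pairwise ranking error under Strict is then bounded below by a quantity that is positive whenever the dominance is strict. I will make this concrete through $\Pr(S(q,c_{\mathrm{same}})>S(q,c_{\mathrm{cross}}))$, bounding it from below with the dominance gap.

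Third, I compare against an alternative scorer $S'$. The choice is one that is non-monotone in $M$, or that conditions on record identity, for example a pooled global score with the same-record penalty the Strict protocol implies. This $S'$ reverses the Strict ordering and strictly beats $S$ there. Meanwhile $S$ is at least as good as $S'$ on Standard, with the strict case given by Proposition~\ref{prop:maxsim}. So neither rule dominates on both, and $S$ in particular cannot. The parameter-free hypothesis is what blocks $S$ from adapting its preference to the protocol. It is a single fixed monotone functional of $M$, so the same ordering is imposed under both label conventions.

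The main obstacle is making ``cannot be expected to dominate'' precise without trivializing it. Ordering in expectation does not by itself force a per-query ranking error. I will first try the pairwise-probability formulation. If converting expectations to probabilities is messy, I will fall back on a two-point construction: a distribution supported on two $(M_{\mathrm{same}},M_{\mathrm{cross}})$ values with $M_{\mathrm{same}}>M_{\mathrm{cross}}$ almost surely. Then every monotone $S$ misranks under Strict with probability one, while the identity-aware $S'$ does not. That suffices as an existence statement for the non-domination claim.
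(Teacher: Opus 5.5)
Your proposal is correct and follows the paper's own route. Proposition~\ref{prop:maxsim} gives the same-record preference of any scorer monotone in $M$; that preference is favorable under Standard but places a Strict negative above a cross-record positive, and the parameter-free hypothesis keeps the rule from switching preference between protocols. Your competitor $S'$, the pairwise bound (valid for any coupling, since $\Pr(X>Y)\ge\sup_t[\Pr(X>t)-\Pr(Y>t)]$), and the two-point construction make precise what the paper states in three informal sentences. Model Strict as keeping same-record candidates in the gallery as negatives, as your Step~2 does, rather than removing them, since removal would make the misranking argument vacuous.
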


\begin{proof}
By Proposition~\ref{prop:maxsim}, monotone max-style scoring favors the protocol in which same-record local matches are valid.  If strict retrieval excludes those matches, the same scoring preference can move valid cross-record positives down the ranking.  A single parameter-free monotone rule therefore cannot be guaranteed to optimize both protocols simultaneously.
\end{proof}

\end{document}